\theoremstyle{thmstyleone}%
\newtheorem{theorem}{Theorem}
\theoremstyle{thmstyletwo}%
\theoremstyle{thmstylethree}%
\newtheorem{definition}{Definition}%
\begin{document}

\title[Article Title]{GraphSB: Boosting Imbalanced Node Classification on Graphs through Structural Balance}



\author[1]{\fnm{Chaofan} \sur{Zhu}}\email{}
\author[1]{\fnm{Xiaobing} \sur{Rui}}\email{}
\author*[1,2]{\fnm{Zhixiao} \sur{Wang}}\email{zhxwang@cumt.edu.cn}

\affil[1]{\orgdiv{Department of Computer Science and Technology}, \orgname{China University of Mining and Technology}, \orgaddress{\city{Xuzhou}, \postcode{221116}, \country{China}}}

\affil[2]{\orgdiv{Engineering Research Center of Mine Digitization}, \orgname{Ministry of Education of the People’s Republic of China}, \orgaddress{\city{Xuzhou}, \postcode{221116}, \country{China}}}


\abstract{

Imbalanced node classification on graphs is a hot research topic in graph learning. 
Most existing methods typically utilize Graph Neural Networks (GNNs) to learn node representations for effective classification. 
However, GNN-based methods often operate under the assumption that the structural distributions of graphs are balanced, overlooking the inherent structural imbalance in real-world graphs. 
We argue that structural imbalance has a significant impact on the imbalanced node classification, focusing solely on class quantity balance fails to account for structural balance and thereby undermines the effectiveness of quantity-based balancing strategies. 
To confirm such a point, this paper provides a detailed theoretical analysis demonstrating that inherent structural imbalance causes GNNs to encounter imbalanced neighborhood structures and biased message-passing mechanisms during node embedding updates. Consequently, minority-class node embeddings suffer from assimilation, since majority-class nodes dominate feature aggregation, leading to accumulated deficiencies during further node synthesis and ultimately limiting the effectiveness of unbalanced node classification. Based on these theoretical insights, we propose a comprehensive imbalanced node classification framework named GraphSB (Graph Structural Balance) that incorporates structural balance as a key strategy addressing graph structural bias before node synthesis. The proposed strategy contains a two-stage graph structure optimization: structure enhancement and relation diffusion. The structure enhancement component strengthens the connectivity of minority class nodes while minimizing structural damage to them; the relation diffusion component effectively reduces noise and expands receptive fields without overfitting. Thus, minority-class nodes can obtain rich information, resulting in high-quality node embeddings. Extensive experiments on five datasets demonstrate that GraphSB significantly outperforms the state-of-the-art methods, achieving improvements of 3.9\% and 2.5\% in ROC-AUC scores on Cora and BlogCatalog datasets, respectively. More importantly, the proposed Structural Balance can be seamlessly integrated into state-of-the-art methods as a simple plug-and-play module, increasing their performance by an average of 3.67\%.


}

\keywords{Imbalanced Node Classification, Graph Neural Network, Class Imbalance, Structural Imbalance}

\vspace{12pt} 

\maketitle

\section{Introduction}\label{sec1}

Graph is a non-Euclidean data structure used to describe complex relational networks. In recent years, Graph Neural Networks (GNNs)\cite{1} have become a powerful tool for graph representation learning with their end-to-end approach to passing and receiving messages to efficiently integrate attribute and structural information. GNNs have achieved advanced performance in various graph-related downstream tasks\cite{2,3,4}. Among them, node classification\cite{5} is one of the most fundamental tasks that plays a critical role in numerous real-world applications, such as drug discovery\cite{drugdiscovery} and fraud detection\cite{frauddetection}. The goal of node classification\cite{6} is to predict the labels of the unlabeled nodes based on the characteristics of the nodes.

In fact, real-world graphs often exhibit a significant class imbalance, with the number of samples varying greatly across different classes \cite{7,liu2023survey}. Thus, imbalanced node classification has emerged as a critical challenge in graph learning. To address this challenge, researchers have developed various approaches. 
Most of them attempt to utilize data-level techniques\cite{zhao2021graphsmote,9,13,liu2025cdcgan} to synthesize  minority nodes for balancing the training process. GraphSMOTE \cite{zhao2021graphsmote} utilizes a GNN encoder to learn node embeddings and applies SMOTE\cite{chawla2002smote} to generate synthetic minority nodes. GraphENS\cite{9} prevents overfitting of synthesized minority nodes by incorporating information from majority classes to generate minority samples. GraphMixup\cite{13} constructs a semantic relation space for the synthesis of minority nodes. CDCGAN\cite{liu2025cdcgan} employs conditional GAN-based generation to ensure diversity and distinguishability of the synthetic nodes.

These above methods address imbalanced node classification by focusing on class quantity balance. However, they risk assimilating the embeddings of minority nodes into majority patterns, leading to unrepresentative sample generation and fundamentally limiting their effectiveness\cite{topping2021understanding}. This is largely because GNNs \cite{5,23,25} perform well on balanced graphs but struggle with structural imbalance, often exacerbating the performance degradation of minority-class nodes. Therefore, researchers have attempted to address the imbalanced node classification from the perspective of algorithm-level. For example, Chen et al.\cite{ReNode} propose topology-aware node reweighting based on influence conflict, reducing weights for boundary nodes. Song et al.\cite{TAM} adjust classification margins using local neighbor distributions but ignore global structural dominance, and Fu et al.\cite{HyperIMBA} leverage hyperbolic geometry to model hierarchy imbalance. However, these methods focus on post hoc adjustments (e.g. loss reweighting or margin tuning) without modifying the inherently imbalanced graph structure, which is insufficient to address the fundamental challenges of imbalanced node classification. In fact, the structural imbalanced graphs typically shows strong asymmetry, minority-class nodes often exhibit sparser neighborhoods and weaker community affiliations, inducing biased neighbor aggregation during message passing, and ultimately resulting in a significantly degraded representation quality of minority-class nodes.



Therefore, we argue that structural imbalance has a significant and non-negligible impact on imbalanced node classification, and focusing solely on quantity balance fails to account for structural imbalance in graphs, a fundamental limitation that inherently undermines the effectiveness of quantity balance. As illustrated in Figure~\ref{fig1}, we demonstrate two typical scenarios in imbalanced graph learning. 
In the first case (top row), despite the quantity imbalance in selected nodes, the balanced graph structure allows GNNs to learn discriminative features. 
In contrast, the second case (bottom row) shows how structural imbalance in graph disrupts the learning process: the imbalanced graph structure affects the graph learning process, leading to inherently biased embeddings that exacerbate bias accumulation in subsequent synthesis. This phenomenon fundamentally stems from the intrinsic coupling between node embedding and graph structure, necessitating early-stage intervention rather than post hoc compensation of biased embeddings.

\begin{figure}[ht]
\centering
\includegraphics[width=0.5\textwidth]{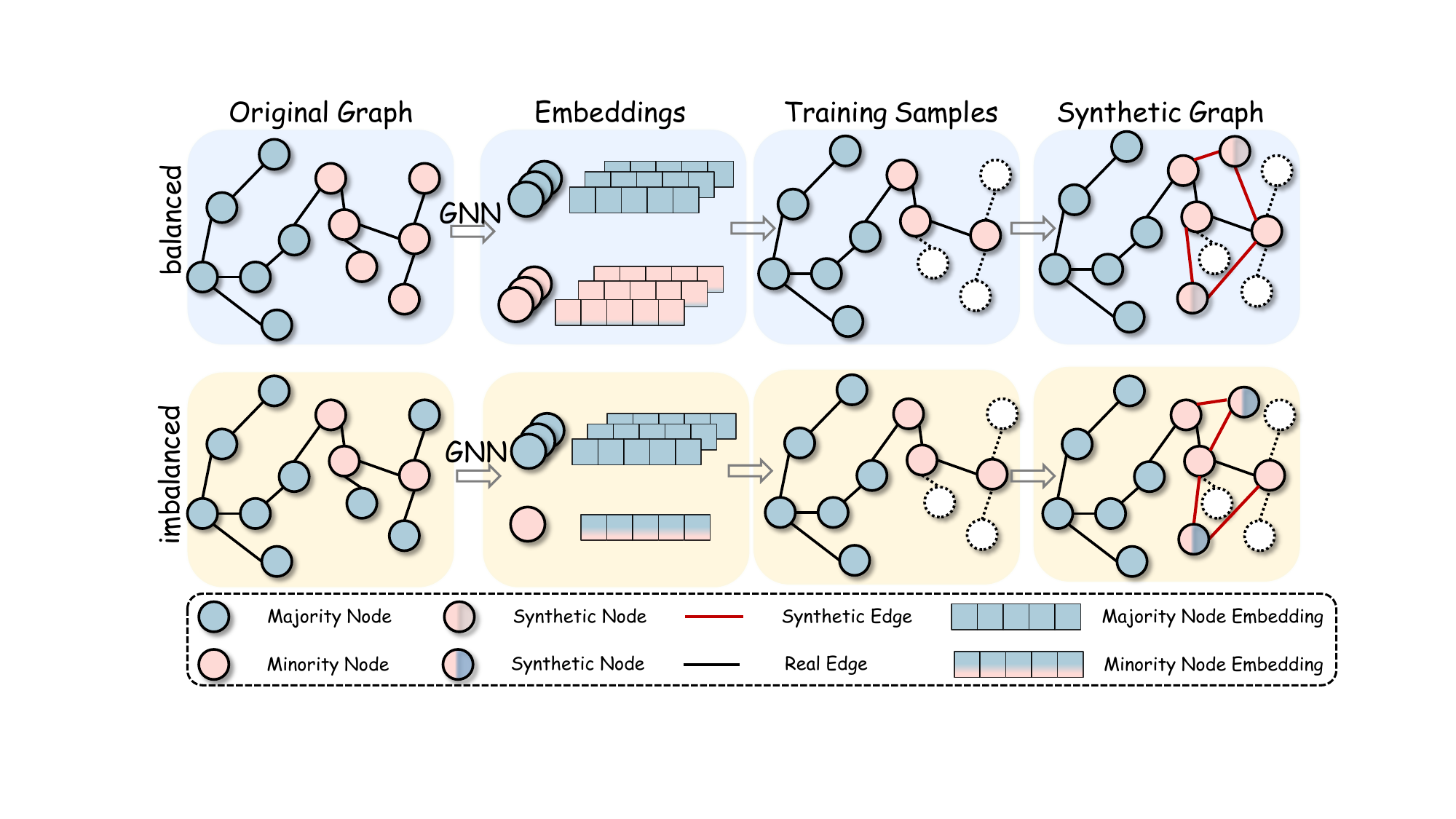}
\caption{Imbalanced node learning on graphs}
\label{fig1}
\end{figure}


Theoretical analysis of structural imbalance bias in GNNs (as shown in Section\ref{sec:theoretical}) confirms our assertions. There's a "minority squeeze" phenomenon in imbalanced graph structure. This phenomenon manifests in two aspects: first, minority-class nodes' embeddings are heavily dominated by high-degree nodes and majority-class nodes' influences; second, during message passing, noise signals from different classes continuously accumulate, leading to node homogenization, while the narrow neighbor aggregation lacks diverse embeddings, thereby further weakening the feature representation capability of minority-class nodes. 

Motivated by these observations, we propose a comprehensive imbalanced node classification framework named \textbf{GraphSB} (Graph Structural Balance) that incorporates Structural Balance as a key strategy to address the structural imbalance of graphs before node synthesis. Specifically, SB contains a two-stage graph structure optimization process: (1) \textbf{Structure Enhancement}, which adaptively strengthens connections between nodes of the minority class by establishing edges with similar nodes in their local neighborhoods. The edge density is dynamically adjusted based on the average degree of majority-class nodes, ensuring balanced structural representation; and (2) \textbf{Relation Diffusion}, which applies multi-step diffusion to expand the receptive field of minority-class nodes. During training, a portion of these diffused edges are randomly dropped in each epoch to enhance robustness against noise and prevent overfitting. The remaining edges are incorporated into the graph neural network for node embedding updates. 

Our contributions are summarized as follows:
\begin{itemize}
    \item \textbf{Preliminary Theoretical Analysis}. We argue that structural imbalance has a significant and non-negligible impact on the imbalanced node classification task. Structural imbalance cannot be remedied by balancing class quantities, and more critically, structural imbalance undermines quantity balance. This insight is confirmed by our theoretical analysis, which reveals that inherent structural imbalance causes GNNs to encounter imbalanced neighborhood structures and biased message-passing mechanisms during node embedding updates. Consequently, minority-class node embeddings suffer from assimilation, since majority-class nodes dominate feature aggregation, leading to accumulated deficiencies during further node synthesis. These learning deficiencies further accumulate during node synthesis, undermining the effectiveness of imbalanced node classification.
    \item \textbf{Model Design}. We propose GraphSB, a comprehensive framework for imbalanced node classification that incorporates structural balance as a key strategy to address graph structural imbalance before node synthesis. GraphSB contains a two-stage graph structure optimization process: the structure enhancement component strengthens minority node connectivity while minimizing structural damage to minority class nodes; the relation diffusion component reduces noise and expands receptive fields without overfitting. This allows minority nodes to obtain rich information, resulting in high-quality node embeddings.
    \item \textbf{Experimental Evaluation}. Extensive experiments on five datasets demonstrate that GraphSB significantly outperforms state-of-the-art methods, achieving improvements of 3.9\% and 2.5\% in ROC-AUC scores on Cora and BlogCatalog datasets, respectively. More importantly, the proposed the proposed Structural Balance can be seamlessly integrated into state-of-the-art methods as a simple plug-and-play module, increasing their performance by an average of 3.67\%.
\end{itemize}

\section{Related Work}
Imbalanced node classification has been extensively studied, especially with the help of GNN models~\cite{johnson2019survey,qu2021imgagn,rout2018handling}. Existing approaches can be broadly categorized into two groups:data-level methods that focus on rebalancing class distributions and algorithm-level methods that optimize the learning process to address class imbalance.

\subsection{Data-level Methods}
In the imbalanced node classification task, some classes of nodes have significantly fewer instances\cite{han2022g,huang2022graph}. Data-level methods have emerged to alleviate the quantity imbalance of different classes\cite{mani2003knn,2019generative,ding2022data}. Traditional data-level solutions such as SMOTE\cite{chawla2002smote} employ resampling techniques to generate synthetic samples. GNNs-based solutions such as GraphSMOTE\cite{zhao2021graphsmote} and ImGAGN\cite{qu2021imgagn} extend these ideas by generating synthetic nodes. GraphMixup\cite{13} extends Mixup\cite{14} to synthesize minor nodes in the embedding space and then determine the neighborhood of newly synthesized nodes with a pre-trained edge predictor. GraphENS\cite{9} synthesizes minority nodes by incorporating information from majority classes. GraphSR\cite{graphsr} provides a more comprehensive approach to handle intertwined imbalances. Although these methods are simple, they focus mainly on quantity rebalancing\cite{xu2024imbalance} while overlooking the inherent structural imbalance of graphs, fundamentally limiting their effectiveness in imbalanced node classification.

\subsection{Algorithm-level Methods}
Algorithm-level methods seek to directly increase the importance of minority classes with appropriate penalty functions\cite{ReNode,11,cao2019learning,wang2022fair} or model regularization\cite{RA-GCN,DR-GCN,qian2022co,nt2019revisiting,yan2023rethinking} to address class imbalance. For example, ReNode\cite{ReNode} incorporates modified loss functions or penalty terms designed to emphasize minority classes. RA-GCN\cite{RA-GCN} proposes learning a parametric penalty function through adversarial training, which reweights samples to help the classifier fit better between the majority and minority classes, thus avoiding bias toward either class. DR-GCN\cite{DR-GCN} tackles the class-imbalanced problem by imposing two types of regularization, combining conditional adversarial training with latent distribution alignment. GNN-INCM\cite{GNN-INCM} attempts to address class imbalance through clustering and graph reconstruction to optimize the embeddings of the minority class node. However, these approaches focus mainly on optimizing the embedding space while overlooking the imbalanced structure in graphs. Recent studies Graph-DAO \cite{xia2024novel} have revealed that minority nodes are inherently more susceptible to biased message passing, and structure imbalance\cite{liu2024class} is further amplified during subsequent node-level rebalancing. Despite these efforts, algorithm-level approaches often struggle to provide a reliable solution, as they typically focus on post hoc adjustments without addressing the fundamental structural imbalance in graphs.

\section{Analysis of Structural imbalance Bias in GNNs}
\label{sec:theoretical}

\subsection{Preliminary}
Let $\mathcal{G}=(V,E,X)$ denote an attributed graph with node set $V$, edge set $E$, and feature matrix $X \in \mathbb{R}^{n \times p_0}$, where $n = |V|$ and $x_i \in \mathbb{R}^{p_0}$ represents the feature vector of node $i \in \{1,\cdots,n\}$. 
In Message Passing Neural Networks (MPNNs)~\cite{MPNNs}, the node representation $h_i^{(\ell)} \in \mathbb{R}^{p_\ell}$ at layer $\ell \geq 0$ evolves through message passing operations. 
Let $\phi_\ell$ and $\psi_\ell$ denote learnable update and aggregation functions, respectively.
The layer-wise propagation rule is then given by:
\begin{equation}
h_i^{(\ell+1)} = \phi_\ell\left(h_i^{(\ell)}, \sum_{j \in \mathcal{N}(i)} \hat{A}_{ij} \psi_\ell(h_i^{(\ell)}, h_j^{(\ell)})\right)
\end{equation}
where $\hat{A} = \tilde{D}^{-1/2}\tilde{A}\tilde{D}^{-1/2}$ is the normalized adjacency matrix with self-loops, $\tilde{A} = A + I$ is the adjacency matrix with self-connections, and $\tilde{D}$ is the degree matrix of $\tilde{A}$. 

Consider a class-imbalanced graph $\mathcal{G}$ with $V = V_1 \cup V_2$ ($V_1\cap V_2=\varnothing$), where $V_1$ and $V_2$ represent the nodes of minority and majority class, respectively, with the imbalance ratio $\beta = \frac{|V_2|}{|V_1|} \gg 1$. 
Under a Stochastic Block Model~\cite{SBM} SBM$(n, p, q)$ with intra-class edge probability $p$ and inter-class probability $q$, where $p \gg q$, the expected degrees are:

\begin{itemize}
    \item For minority class nodes: $\mathbb{E}[d_{v_1}] = n_1 p + n_2 q = n_1 (p + q \beta)$
    \item For majority class nodes: $\mathbb{E}[d_{v_2}] = n_1 q + n_2 p = n_1 (q + p \beta)$
\end{itemize}

We define the degree disparity factor $\gamma = \frac{\mathbb{E}[d_{v_2}]}{\mathbb{E}[d_{v_1}]} = \frac{q + p\beta}{p + q\beta} \geq 1$, which quantifies the structural advantage of majority nodes.

\subsection{Majority Node Dominance}

In this subsection, we analyze how the structural bias in GNNs impacts minority class node representations. We demonstrate that under class imbalance, two key mechanisms significantly disadvantage minority nodes: information dilution through over-squashing and gradient dominance from majority class nodes.

\textbf{The over-squashing phenomenon} in GNNs, first analyzed in \cite{bottleneck}, manifests as exponential decay of information from distant nodes. Consider an $L$-layer MPNN with node features $X \in \mathbb{R}^{n\times p_0}$, message functions $\psi_\ell$, and update functions $\phi_\ell$ satisfying $\|\nabla\phi_\ell\| \leq \alpha$, $\|\nabla\psi_\ell\| \leq \beta$. For nodes $i, s \in V$ with geodesic distance $d_\mathcal{G}(i,s) = r+1$, the Jacobian sensitivity exhibits:

\begin{equation}
\left|\frac{\partial h_i^{(r+1)}}{\partial x_s}\right| \leq (\alpha\beta)^{r+1}(\hat{A}^{r+1})_{is}
\label{eq:jacobian_bound}
\end{equation}

For example, in bottleneck structures like $b$-ary trees, $(\hat{A}^{r+1})_{is} = \mathcal{O}(b^{-r})$ confirms exponential information distortion.

\begin{definition}
\textbf{Information Dilution} refers to the phenomenon where information propagation between nodes decays rapidly as the distance increases. In class-imbalanced graphs, this decay is further accelerated by the degree disparity caused by majority class nodes, with the attenuation factor amplified by $\gamma^{-l}$.
\end{definition}

The relationship between Information Dilution and the over-squashing phenomenon is formally quantified in the following Theorem~\ref{thm:info_dilution}.

\begin{theorem}
\label{thm:info_dilution}
For a path of length $l$ in a class-imbalanced graph with imbalance ratio $\beta$ and degree ratio $\gamma$, the expected path weight satisfies:

\begin{equation}
\mathbb{E}[\mathcal{W}_l] = \frac{W^{(l)}}{\mathbb{E}[d_{v_1}]^l}(\frac{\beta+\gamma}{\gamma(\beta+1)})^l
\end{equation}
where  $\mathcal{W}_l$ denotes the weight of information propagated along a path of length $l$. $W^{(l)} = \prod_{\ell=1}^l \nabla\phi_\ell \nabla\psi_\ell$ is the cumulative weight matrix product.

\end{theorem}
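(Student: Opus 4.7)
The plan is to expand the path weight $\mathcal{W}_l$ explicitly, take expectations over the random SBM structure by conditioning on the class membership of each intermediate node, and collapse the resulting sum into the claimed closed form.

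First I would write the weight carried along a fixed path $i_0\to i_1\to\cdots\to i_l$ as $\mathcal{W}_l = W^{(l)}\,\prod_{k=1}^{l}\hat{A}_{i_{k-1}i_k}$, where $W^{(l)}=\prod_{\ell=1}^l\nabla\phi_\ell\,\nabla\psi_\ell$ absorbs the Jacobian factors already appearing in the over-squashing bound~(2) and the remaining product is the purely structural contribution. Using the symmetric normalization $\hat{A}_{ij}=\tilde d_i^{-1/2}\tilde d_j^{-1/2}$, the product telescopes to $\tilde d_{i_0}^{-1/2}\tilde d_{i_l}^{-1/2}\prod_{k=1}^{l-1}\tilde d_{i_k}^{-1}$, so $\mathcal{W}_l$ is governed by $l$ inverse-degree factors, one per vertex traversed.

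Next I would evaluate the expectation of a single inverse-degree factor. For a vertex drawn according to its SBM class prior $\Pr[v\in V_1]=1/(\beta+1)$ and $\Pr[v\in V_2]=\beta/(\beta+1)$, the identity $\mathbb{E}[d_{v_2}]=\gamma\,\mathbb{E}[d_{v_1}]$ together with the standard concentration approximation $\mathbb{E}[1/d]\approx 1/\mathbb{E}[d]$ gives
\begin{equation*}
\mathbb{E}\!\left[\tfrac{1}{d_v}\right] = \tfrac{1}{\beta+1}\cdot\tfrac{1}{\mathbb{E}[d_{v_1}]} + \tfrac{\beta}{\beta+1}\cdot\tfrac{1}{\gamma\,\mathbb{E}[d_{v_1}]} = \tfrac{1}{\mathbb{E}[d_{v_1}]}\cdot\tfrac{\beta+\gamma}{\gamma(\beta+1)}.
\end{equation*}
Chaining this identity over the $l$ vertices of the path — using that in the sparse SBM the inverse-degree factors decorrelate across distinct vertices up to vanishing corrections — produces the joint expectation $\bigl(\mathbb{E}[d_{v_1}]^{-1}(\beta+\gamma)/(\gamma(\beta+1))\bigr)^l$, and multiplying by the deterministic factor $W^{(l)}$ reproduces exactly the stated form of $\mathbb{E}[\mathcal{W}_l]$.

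The hard part will be justifying the interchange of expectation with the telescoped product: strictly, $\mathbb{E}\bigl[\prod_k 1/d_{i_k}\bigr]$ is not $\prod_k\mathbb{E}[1/d_{i_k}]$ because $1/d$ is nonlinear and because degrees of nearby vertices share endpoints. I would address this via a standard Chernoff/Bernstein bound on SBM degrees (valid whenever $\mathbb{E}[d_{v_1}]=\omega(\log n)$) which substitutes each $d_{i_k}$ by its mean with a $1\pm o(1)$ multiplicative error, so the product cleans up to the stated identity in the annealed/asymptotic sense. A secondary bookkeeping point is the asymmetric endpoint treatment arising from the symmetric normalization — the half-degree factors at $i_0$ and $i_l$ can either be absorbed into $W^{(l)}$ or be handled by stating the identity after conditioning on a minority source $i_0\in V_1$, which is the regime relevant to the \emph{minority squeeze} phenomenon targeted by the paper.
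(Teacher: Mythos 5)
Your proposal is essentially the paper's own argument: the paper also treats each of the $l$ steps as contributing an inverse-degree factor whose class-conditional expectation is $p_1/\mathbb{E}[d_{v_1}] + p_2/(\gamma\,\mathbb{E}[d_{v_1}])$, merely writing the $l$-th power of that one-step average as an explicit binomial sum over the number $k_p$ of majority-class steps, which collapses to the same $\bigl(\tfrac{\beta+\gamma}{\gamma(\beta+1)}\bigr)^l$ factor you obtain. If anything you are more careful than the paper, which silently assumes $\mathbb{E}[1/d]\approx 1/\mathbb{E}[d]$, independence of the per-vertex factors, and ignores the endpoint half-degree asymmetry — all points you correctly flag as requiring a concentration argument.
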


\begin{proof}
The class-conditional expectations for message propagation are characterized as follows: (1) Majority paths ($V_2$): $\mathbb{E}\left[\frac{\|W^{(l)}\|}{N_{\text{maj}}}\right] = \frac{\|W^{(l)}\|}{\gamma\mathbb{E}[d_{v_1}]}$, and (2) Minority paths ($V_1$): $\mathbb{E}\left[\frac{\|W^{(l)}\|}{N_{\text{min}}}\right] = \frac{\|W^{(l)}\|}{\mathbb{E}[d_{v_1}]}$.
The probability of randomly selecting a node from each class is:
\begin{equation}
\begin{aligned}
p_1 &= P(v \in V_1) = \frac{1}{\beta+1}\\
p_2 &= P(v \in V_2) = \frac{\beta}{\beta+1}
\end{aligned}
\end{equation}

For a path of length $l$, let $k_p$ represent the number of steps passing through majority class nodes. Applying the binomial theorem, the path weight expectation is:
\begin{equation}
\begin{aligned}
\mathcal{W}_l &= \frac{W^{(l)}}{\mathbb{E}[d]^l} \sum_{k_p=0}^{l} \binom{l}{k_p} \left(\frac{p_2}{\gamma}\right)^{k_p} p_1^{l-k_p}\\
&= \frac{W^{(l)}}{\mathbb{E}[d_{v_1}]^l} \sum_{k_p=0}^{l} \binom{l}{k_p} \left(\frac{\beta}{\gamma(\beta+1)}\right)^{k_p} \left(\frac{1}{\beta+1}\right)^{l-k_p}\\
&= \frac{W^{(l)}}{\mathbb{E}[d_{v_1}]^l} \left(\frac{\beta + \gamma}{\gamma(\beta+1)}\right)^{l}
\end{aligned}
\end{equation}

Thus concludes the proof.

\end{proof}

Theorem~\ref{thm:info_dilution} demonstrates that majority class nodes, with their higher degree ($\gamma$ times higher than minority nodes), accelerate information dilution, causing minority node features to be overwhelmed in message passing.

To better characterize the mechanism by which the majority class nodes dominate the learning process, we formally introduce the concept of Gradient Dominance as follows.

\begin{definition}
\textbf{Gradient Dominance} refers to the phenomenon in class-imbalanced graphs where the gradients associated with majority class nodes overwhelmingly influence the parameter updates during training, thereby suppressing the learning signals from minority class nodes.
\end{definition}

The effect of Gradient Dominance is formally quantified in the following Theorem~\ref{thm2}.

\begin{theorem}
\label{thm2}
Assume path weight $\mathcal{W}_l$ is independent of node degrees, then 
the expected ratio of gradient magnitudes between a minority class node $u \in V_1$ and a majority class node $v \in V_2$ at depth $L$ is:
\begin{align}
\frac{\mathbb{E}[\|\nabla_\theta \mathcal{L}(u)\|]}{\mathbb{E}[\|\nabla_\theta \mathcal{L}(v)\|]} &\propto 
\frac{1}{\beta}
\end{align}
\end{theorem}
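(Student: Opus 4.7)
The plan is to expand $\nabla_\theta \mathcal{L}(u)$ into contributions along $L$-step walks through the message-passing graph, and then take expectations under the Stochastic Block Model in order to isolate the class-size dependence that produces the $1/\beta$ scaling. First I would apply the chain rule to obtain
\begin{equation}
\nabla_\theta \mathcal{L}(u) = \sum_{s} \frac{\partial \ell_u}{\partial h_u^{(L)}}\cdot \frac{\partial h_u^{(L)}}{\partial h_s^{(0)}}\cdot \frac{\partial h_s^{(0)}}{\partial \theta},
\end{equation}
and identify the Jacobian $\partial h_u^{(L)}/\partial h_s^{(0)}$ with the sum of path weights along length-$L$ walks from $u$ to $s$, which connects the argument directly to the path-weight calculus of Theorem~\ref{thm:info_dilution}. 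I would then partition the sum over $s$ according to class membership and count the expected number of length-$L$ walks from $u$ to minority and majority targets, using the SBM edge probabilities together with the class priors $p_1 = 1/(\beta+1)$ and $p_2 = \beta/(\beta+1)$.

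Under the theorem's assumption that $\mathcal{W}_l$ is independent of node degrees, the per-walk weight enters both numerator and denominator as a common constant and drops out; what remains is a combinatorial count of reachable loss-bearing nodes. For a minority node $u$, the aggregate signal is dominated by walks terminating at minority-class targets through intra-class (probability $p$) edges, while for a majority node $v$ the symmetric calculation is dominated by walks terminating in majority nodes, of which there are $\beta$ times as many. A binomial-expansion argument analogous to the one used in the proof of Theorem~\ref{thm:info_dilution} would then confirm that the dominant term in $\mathbb{E}[\|\nabla_\theta \mathcal{L}(v)\|]$ carries an extra factor of $n_2/n_1 = \beta$ relative to $\mathbb{E}[\|\nabla_\theta \mathcal{L}(u)\|]$, yielding the asserted proportionality.

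The main obstacle will be verifying that the degree-related structural factors genuinely cancel once the degree-independence hypothesis on $\mathcal{W}_l$ is imposed; in particular, one must check that the SBM walk counts, when normalized consistently with the symmetric adjacency $\hat{A}$, do not reintroduce a hidden $\gamma$-dependence that would contaminate the final ratio. A secondary subtlety is justifying that the subleading inter-class walk contributions are $\mathcal{O}(q/p)$ smaller and can be safely discarded under the regime $p \gg q$, so that the clean class-prevalence scaling $1/\beta$ is what survives in the limit.
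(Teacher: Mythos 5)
Your overall skeleton (chain rule, walk decomposition, partition by class of the target node, class priors $p_1=1/(\beta+1)$, $p_2=\beta/(\beta+1)$) matches the paper's, but the step that actually produces the $1/\beta$ is different in a way that matters, and your version of it does not work. You argue that for a minority source the gradient is dominated by intra-class walks to minority targets, for a majority source by intra-class walks to majority targets, and that the latter count carries ``an extra factor of $n_2/n_1=\beta$.'' If you count those walks, the intra-class walk counts are roughly $(n_1p)^L$ versus $(n_2p)^L$, so the ratio is $\beta^{-L}$, not $\beta^{-1}$; and if you instead keep the symmetric degree normalization, the $\beta^{L}$ is absorbed by the larger majority degrees and the ratio tends to $1$ as $q/p\to 0$. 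Neither limit yields the claimed $1/\beta$, and your proposed $p\gg q$ truncation of inter-class walks is precisely what breaks the cancellation. You flagged this risk yourself (the ``hidden $\gamma$-dependence''), but the resolution you sketch does not resolve it.

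The paper's proof avoids this by never comparing walk counts between the two source classes. It computes a single expected receptive-field size
\begin{equation}
\mathbb{E}[N^{(l)}]=\mathbb{E}[d_{v_1}]^l\left(\frac{1+\beta\gamma}{\beta+1}\right)^l ,
\end{equation}
treated as common to both source nodes, keeps all mixed inter-class walks inside the binomial sum, and then attributes to each class a gradient contribution proportional to $\mathcal{W}_l\times(\text{class prior})\times\mathbb{E}[N^{(l)}]$. The common factors $\mathcal{W}_l$ and $\mathbb{E}[N^{(l)}]$ cancel exactly and the ratio is $p_1/p_2=1/\beta$ with no asymptotic argument in $q/p$. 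To repair your proof you would need to replace the ``dominant intra-class walk'' counting with this class-prevalence share of a common receptive field (or otherwise justify why the per-source walk totals are equal under the degree-independence hypothesis), rather than discarding inter-class contributions.
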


\begin{proof}
The expected number of nodes at the $l$-th level is:
\begin{equation}
\begin{aligned}
\mathbb{E}[N^{(l)}] &= \mathbb{E}[d_{v_1}]^{l}\sum_{k_p=0}^{l} \binom{l}{k_p} \left({p_2}{\gamma}\right)^{k_p} p_1^{l-k_p}\\
&= \mathbb{E}[d_{v_1}]^l \cdot \left(\frac{1 + \beta\gamma}{\beta+1}\right)^l
\end{aligned}
\end{equation}

Class contributions can be expressed as: Minority: $\mathcal{W}_l \times \frac{1}{\beta+1} \times \mathbb{E}[N^{(l)}]$ and Majority: $\mathcal{W}_l \times \frac{\beta}{\beta+1} \times \mathbb{E}[N^{(l)}]$. Taking the ratio, we obtain:

\begin{align}
\frac{\mathbb{E}[\|\nabla_\theta \mathcal{L}(u)\|]}{\mathbb{E}[\|\nabla_\theta \mathcal{L}(v)\|]} &\propto \frac{\text{Minority }}{\text{Majority }} 
= \frac{1}{\beta}
\end{align}

Thus concludes the proof.
\end{proof}

Theorem~\ref{thm2} demonstrates that the numerical advantage and higher degree of majority class nodes make them dominant in information propagation and gradient updates.

\subsection{Minority Feature Assimilation}

In this section, we analyze how minority node features become progressively assimilated during neighborhood aggregation.

\begin{definition}
\textbf{Feature Assimilation} is the progressive convergence of minority class node representations toward majority class centroids during GNN message passing, the distinctive characteristics of minority nodes are overwhelmed by the predominant influence of majority class nodes, leading to reduced class separability in deeper layers.
\end{definition}

\begin{theorem}
\label{thm:feature_conv}
In class-imbalanced graphs with imbalance ratio $\beta \gg 1$, the centroid distance $\Delta^{(l)} = \mu_1^{(l)} - \mu_2^{(l)}$ between minority and majority classes decays exponentially with network depth:
\begin{equation}
\|\Delta^{(l)}\| \leq C \cdot (\sigma_{\max}(W)\lambda_2(M))^l\|\Delta^{(0)}\|
\end{equation}
where $C$ is a constant, $\Delta^{(0)}$ is the initial centroid distance, $\sigma_{\max}(W)$ represents the maximum singular value of weight matrix $W$, and $\lambda_2(M)$ denotes the second eigenvalue of the propagation matrix $M$ with $\sigma_{\max}(W)\lambda_2(M) < 1$.
\end{theorem}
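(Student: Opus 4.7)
The plan is to track the evolution of the class centroids $\mu_1^{(l)}$ and $\mu_2^{(l)}$ through a spectral analysis of the propagation operator $M$, treating the GNN update as a composition of a graph-level smoothing step (governed by $M$) and a feature-level linear transformation (governed by $W$). Concretely, I would model one layer of message passing in the linearized regime as $H^{(l+1)} = M H^{(l)} W$, so that each centroid becomes $\mu_c^{(l+1)} = \tfrac{1}{|V_c|}\mathbf{1}_{V_c}^\top M H^{(l)} W$, where $\mathbf{1}_{V_c}$ is the class indicator. The centroid difference then reads
\begin{equation}
\Delta^{(l+1)} = \Bigl(\tfrac{1}{|V_1|}\mathbf{1}_{V_1}^\top - \tfrac{1}{|V_2|}\mathbf{1}_{V_2}^\top\Bigr)\, M\, H^{(l)}\, W,
\end{equation}
so the evolution of $\Delta^{(l)}$ is driven on the node side by the signed class-contrast vector acted upon by $M$, and on the feature side by $W$.

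Next I would invoke the spectral decomposition of $M=\hat A$. Since $\hat A$ is symmetric (after the standard symmetric normalization) with leading eigenvector $v_1 \propto \tilde D^{1/2}\mathbf{1}$ and eigenvalue $\lambda_1 = 1$, the crucial observation is that the class-contrast vector $u := \tfrac{1}{|V_1|}\mathbf{1}_{V_1} - \tfrac{1}{|V_2|}\mathbf{1}_{V_2}$ is essentially orthogonal to $v_1$ (its inner product vanishes up to degree-disparity corrections that I would bound using the ratio $\gamma$ from Section 3.1). Consequently $u$ lies in the span of eigenvectors with eigenvalues bounded by $\lambda_2(M)$, which yields the operator-norm contraction $\|u^\top M^l\| \leq \lambda_2(M)^l\|u\|$. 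Composing with the feature-side bound $\|H W\| \leq \sigma_{\max}(W)\|H\|$ and iterating over the $l$ layers gives
\begin{equation}
\|\Delta^{(l)}\| \leq \bigl(\sigma_{\max}(W)\,\lambda_2(M)\bigr)^l \|\Delta^{(0)}\| \cdot C,
\end{equation}
where $C$ absorbs the constant coming from the near-orthogonality of $u$ to $v_1$ and from the normalization of the class indicators.

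To finish, I would justify the reduction to the linear setting by noting that common activations (ReLU, tanh) are $1$-Lipschitz, so the same Jacobian-based bound carries through with $\sigma_{\max}(W)$ replaced by the Lipschitz-adjusted constant; under the theorem's hypothesis $\sigma_{\max}(W)\lambda_2(M)<1$, the geometric rate survives. The main obstacle I anticipate is making rigorous the step that $u$ is (approximately) orthogonal to the Perron eigenvector of $\hat A$: in the balanced case this is exact, but under strong imbalance ($\beta\gg 1$) and degree disparity ($\gamma>1$) the inner product $\langle u, v_1\rangle$ is nonzero and must be controlled. I would handle this by projecting $u$ onto $v_1^\perp$ and showing that the residual component scales like $O(\gamma^{-1})$ times a bounded constant, which can then be folded into $C$ without affecting the exponential rate. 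A secondary subtlety is the possible nonuniformity of node sampling within classes under SBM; here I would either assume a mild concentration of empirical class degrees around their expectations (justified by standard SBM concentration) or state the result in expectation, consistent with the earlier theorems in this section.
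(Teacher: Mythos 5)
Your route is genuinely different from the paper's. The paper never touches the $n\times n$ operator $\hat A$: it takes expectations of the normalized-aggregation update under the SBM, which collapses the dynamics onto the two class centroids and yields a $2\times 2$ mean-field mixing matrix
\[
M=\begin{bmatrix}\tfrac{p}{p+q\beta} & \tfrac{q\beta}{q+p\beta}\\[2pt] \tfrac{q}{p+q\beta} & \tfrac{p\beta}{q+p\beta}\end{bmatrix},
\]
so that $\mathbf z^{(l)}=[\mu_1^{(l)},\mu_2^{(l)}]^\top$ evolves as $\mathbf z^{(l)}=(WM)^l\mathbf z^{(0)}$ and the decay rate is read off from the subdominant eigenvalue of this $2\times2$ matrix (this is the $M$ appearing in the numerical example attached to the theorem). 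You instead keep the full graph operator, identify $M$ with $\hat A$, and control the class-contrast functional $u^\top \hat A^{\,l}$ via the spectral gap. Your version is in principle stronger — it works on a realized graph and makes explicit where the consensus eigenvector enters — but note that your $\lambda_2(\hat A)$ and the paper's $\lambda_2(M)$ coincide only in the idealized limit where $\mathbb E[\hat A]$ is block-constant of rank two, so as written you are proving a statement about a different spectral quantity than the one the paper computes.

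The genuine gap is the step you flag and then dismiss. When $\gamma>1$ the contrast vector $u$ is not orthogonal to $v_1\propto\tilde D^{1/2}\mathbf 1$, so $u^\top\hat A^{\,l}$ contains the component $\lambda_1^{\,l}\langle u,v_1\rangle v_1^\top=\langle u,v_1\rangle v_1^\top$, which does not decay in $l$ at all. This cannot be ``folded into $C$'': a constant in front of $(\sigma_{\max}(W)\lambda_2)^l$ cannot absorb a term whose node-side factor is constant in $l$; for large $l$ that term dominates and the claimed rate fails. Bounding the residual by $O(\gamma^{-1})$ controls its size, not its persistence. To repair this you would need either (i) to replace $u$ by the degree-weighted contrast $\tilde D^{1/2}\bigl(\tfrac{1}{\mathrm{vol}(V_1)}\mathbf 1_{V_1}-\tfrac{1}{\mathrm{vol}(V_2)}\mathbf 1_{V_2}\bigr)$, which is exactly orthogonal to $v_1$ but then bounds a degree-weighted centroid difference rather than $\mu_1-\mu_2$, or (ii) to pass to the expected block model and diagonalize the resulting $2\times2$ system, which is precisely the paper's route (and even there one must check that $[1,-1]$ is annihilated by the $\lambda_1$-eigenprojection of $M$, a point the paper also glosses over since its $M$ is not row-stochastic). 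A secondary issue: your chain of inequalities yields $\|\Delta^{(l)}\|\le \lambda_2^{\,l}\,\|u\|\,\|H^{(0)}\|\prod_i\sigma_{\max}(W^{(i)})$, i.e.\ a bound in terms of $\|H^{(0)}\|$ rather than $\|\Delta^{(0)}\|$; you recover the stated form only under a block-constant initialization, and that assumption should be made explicit.
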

For example, setting $p = 0.5$, $q = 0.1$, and $\beta = 10$, we obtain
\[
M = \begin{bmatrix}
0.333 & 0.196 \\
0.067 & 0.980
\end{bmatrix},
\quad \lambda_1 \approx 1, \quad \lambda_2 \approx 0.313.
\]
Since $\lambda_2 < 1$, it governs the convergence rate. By spectral decomposition, the magnitude of $\Delta^{(l)}$ decays as $\mathcal{O}(0.313^l)$, demonstrating how minority features rapidly assimilate toward majority class representations with increasing network depth.

\begin{proof}

In a GNN, the representation of node $u$ at layer $l$ is updated as:
\begin{equation}
h_u^{(l)} = W^{(l)} \sum_{v \in \mathcal{N}(u)} \frac{h_v^{(l-1)}}{\sqrt{d_u d_v}}
\end{equation}

where $\mathcal{N}(u)$ denotes the neighbors of $u$, $d_v$ is the degree of node $v$, and $W^{(l)}$ is the layer-specific weight matrix. minority class centroids are defined as:
\begin{equation}
\mathbb{E}[h_u^{(l)}] = W^{(l)} \left( \frac{p}{p+q\beta} \mu_1 + \frac{q\beta}{\sqrt{(p+q\beta)(q+p\beta)}} \mu_2 \right) 
\end{equation}

when overestimating minority influence (via $q+p\beta>\sqrt{(p+q\beta)(q+p\beta)}$),
For a node $u \in V_1$, the expected update is:
\begin{equation}
\mathbb{E}[h_u^{(l)}] \approx W^{(l)} \left( \frac{p}{p + q \beta} \mu_1^{(l-1)} + \frac{q \beta}{q + p \beta} \mu_2^{(l-1)} \right)
\end{equation}
Similarly, for $u \in V_2$:
\begin{equation}
\mathbb{E}[h_u^{(l)}] \approx W^{(l)} \left( \frac{q}{p + q \beta} \mu_1^{(l-1)} + \frac{p \beta}{q + p \beta} \mu_2^{(l-1)} \right)
\end{equation}

Define the state vector $\mathbf{z}^{(l)} = [\mu_1^{(l)}, \mu_2^{(l)}]^\top$. The GNN's message passing induces:
\begin{equation}
\mathbf{z}^{(l)} = M W \mathbf{z}^{(l-1)}, \quad 
M = \begin{bmatrix}
\frac{p}{p + q\beta} & \frac{q\beta}{q + p\beta} \\
\frac{q}{p + q\beta} & \frac{p\beta}{q + p\beta}
\end{bmatrix}
\end{equation}

After $l$ layers: 
\begin{equation}
\mathbf{z}^{(l)} = (WM)^l \mathbf{z}^{(0)}
\end{equation}

The convergence rate is governed by $\rho(WM) \leq \sigma_{\max}(W)\lambda_2(M)$, where $\lambda_2(M)$ is the subdominant eigenvalue of $M$.The centroid distance $\Delta^{(l)} = \mu_1^{(l)} - \mu_2^{(l)}$ decays as:
\begin{equation}
\|\Delta^{(l)}\| \leq C \cdot \underbrace{(\sigma_{\max}(W)\lambda_2(M))^l}_{\text{exponential decay}} \|\Delta^{(0)}\|
\end{equation}
The condition $\sigma_{\max}(W)\lambda_2(M) < 1$ ensures progressive assimilation of minority features.\\
Thus concludes the proof.
\end{proof}

\subsection{Conclusion}
\begin{itemize}
    \item \textbf{Majority Dominance Effect:} High-degree majority nodes with numerical superiority dominate message propagation paths, while minority nodes with sparser connectivity have restricted receptive fields, leading to weaker information aggregation and reduced representational power in the embedding space

    \item \textbf{Minority Assimilation Effect:} Minority node features become progressively assimilated during neighborhood aggregation, as their distinctive characteristics are overwhelmed by the predominant influence of majority-class nodes through iterative message passing.
\end{itemize}

\section{Methodology}
\label{sec4}

Based on the research findings presented in Section 3, we propose the GraphSB framework to mitigate the cross-impact of minority node embeddings caused by both quantitative imbalance and structural imbalance inherent in graph topology. GraphSB is designed to alleviate GNN-induced errors and biases, aiming to enhance the accurate representation of node embeddings and class distributions while preserving the intrinsic topological and attribute information of the data.
fig 3 depicts the outcomes of ablation experiments conducted on three
datasets. From the outcomes, we have the following findings:

An overview of the proposed GraphSB framework is depicted in Fig ~\ref{framework}. Our approach prioritizes structural balance as a key strategy and comprises two core components: (1) similarity-based Graph Structure Enhancement that adaptively strengthens connections between minority nodes, and (2) Relation Diffusion that expands receptive fields while preventing noise accumulation. Together, these components effectively mitigate the embedding quality degradation of minority nodes caused by structural bias in GNNs. For quantity balancing strategies, we follow the same approach as GraphMixup\cite{13}.

\begin{figure*}[!ht]
\centering
\includegraphics[width=\textwidth]{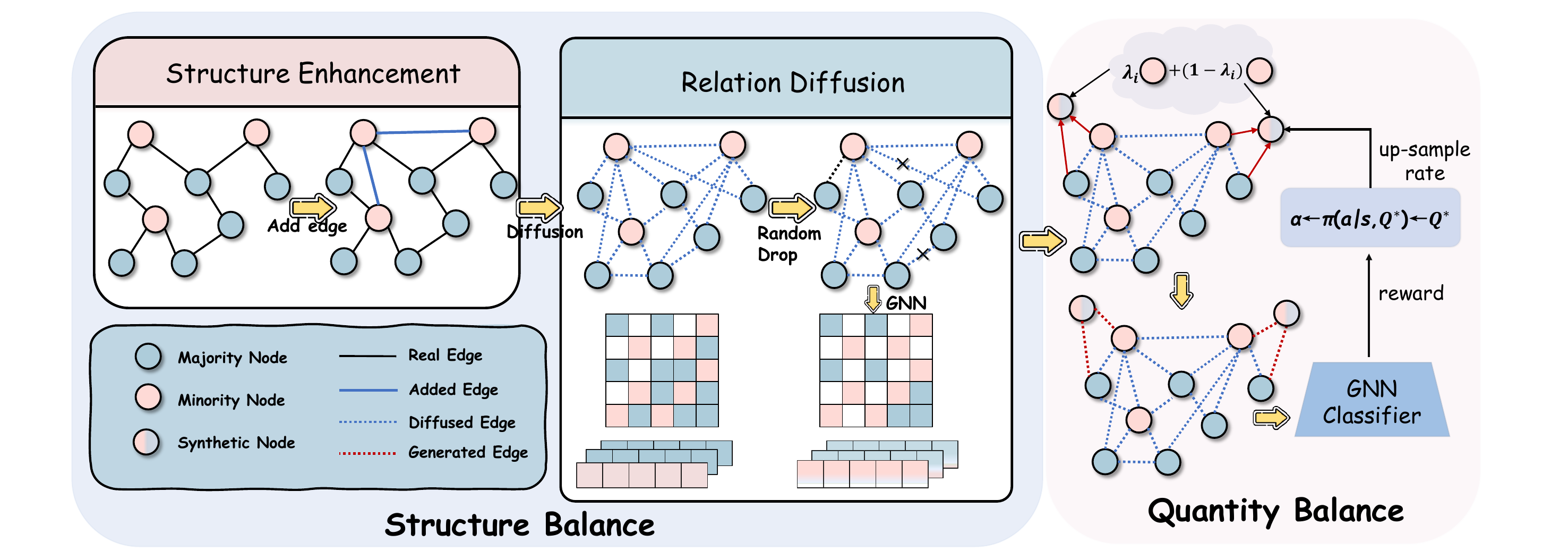}
\caption{Overview of the proposed GraphSB framework, which incorporates two key steps to enhance the representation of imbalanced data in graph learning.}
\label{framework}
\end{figure*}

tr\subsection{Graph Structure Enhancement}  
Structural imbalance in graphs arises from the inherent sparsity of intra-class edges among minority nodes, which hinders effective message passing and leads to suboptimal node embeddings. To address this structural imbalance in class-imbalanced graph learning, we propose a novel framework that reconstructs graph structure through adaptive local enhancement of minority class node connectivity, thereby facilitating effective propagation of intra-class information. This structural enhancement mechanism improves the embedding quality of minority nodes and obtains more representative samples to enhance the model's learning performance.

First, we identify minority classes by selecting the $k$ classes with the least samples in the training set:
\begin{equation}
\mathcal{C}_m = \{c \mid c \in \text{argsort}(|\{v \in \mathcal{T} \mid y_v = c\}|)[0:k]\}
\end{equation}
For minority nodes that require edge enhancement, we employ a localized adaptive strategy that preserves structural integrity while improving connectivity. To achieve accurate node selection for edge augmentation, we first calculate the cosine similarity between nodes: $\text{sim}(i,j) = \frac{\mathbf{x}_i^T \mathbf{x}_j}{||\mathbf{x}_i|| \cdot ||\mathbf{x}_j||}$ where $\mathbf{x}_i$ and $\mathbf{x}_j$ are the L2-normalized feature vectors.

However, enforcing arbitrary structural enhancements across distant regions of the graph can be structurally unfair and inevitably leads to information loss and degradation of the original graph topology. To preserve the inherent structural properties and prevent the corruption of long-range dependencies, we constrain our edge enhancement to local neighborhoods. We further incorporate locality constraints by computing the distance-aware similarity $\text{sim}_{\text{local}}(i,j) = \text{sim}(i,j) \cdot \mathbb{I}[d_{\mathcal{G}}(i,j) \leq 4]$, where $\mathbb{I}[\cdot]$ is the indicator function and $d_{\mathcal{G}}(i,j)$ is the shortest path distance. Specifically, we define the candidate set within the 4-hop neighborhood of each minority node:
\begin{equation}
\mathcal{N}_v^{\text{local}} = \{j \mid j \in \mathcal{V} \setminus \{v\}, A_{v,j}=0, d_{\mathcal{G}}(v,j) \leq 4\}
\end{equation}
Within this local constraint, for each minority node, our adaptive edge enhancement strategy calculates the degree difference between majority and minority nodes and selects the most similar nodes from the local candidate set:
\begin{equation}
\mathcal{N}_v = \text{top-}\Delta d_v(\mathcal{N}_v^{\text{local}}, \text{sim}(\cdot, v))
\end{equation}
where $\Delta d_v = \max(0,\bar{d}_{\text{maj}} - d_v)$ represents the adaptive number of edges to add, $\bar{d}_{\text{maj}}$ is the average degree of majority nodes, and $d_v$ is the current degree of node $v$.

The updated adjacency matrix is defined as:
\begin{equation}
A'_{v,u}  =
\begin{cases}
1, & \text{if } u \in \mathcal{N}_v \text{ or } v \in \mathcal{N}_u \\
A_{v,u}, & \text{otherwise}
\end{cases}
\end{equation}

This localized approach ensures that minority nodes achieve connectivity comparable to majority nodes while maintaining semantic relevance through similarity-guided edge selection and preserving the fundamental structural integrity of the original graph topology.

\subsection{Relation Diffusion Process}
After adding intra-class connections for minority nodes, we refine the connectivity structure using a multi-step diffusion process. In a standard adjacency matrix, a direct connection is represented by a value of 1, while no connection is denoted by 0. However, such a binary representation is often too rigid to capture the nuanced relationships in the graph. To overcome this, we iteratively update the connectivity using a diffusion operator:
\begin{itemize}
    \item At each diffusion step, the weights of existing edges are gradually decayed.
    \item Indirect relationships, which were originally absent (i.e., marked by 0), begin to accumulate non-zero values through propagation from neighbors and their neighbors.
\end{itemize}

The symmetrically normalized adjacency matrix $\mathbf{\hat{A}} = \mathbf{D}^{-\frac{1}{2}}(\mathbf{A} + \mathbf{I})\mathbf{D}^{-\frac{1}{2}}$ is computed using the diagonal degree matrix $\mathbf{D}$ where $D_{ii} = \sum_j (A_{ij} + I_{ij})$, which prevents numerical instability during diffusion. The generalized diffusion process forms a geometric series $\mathbf{S} = \alpha \sum_{k=0}^{\infty} (1-\alpha)^k \mathbf{\hat{A}}^k$ with teleport probability $\alpha \in (0,1)$, converging to $\mathbf{S} = \alpha(\mathbf{I} - (1-\alpha)\mathbf{\hat{A}})^{-1}$.

For computational efficiency, we approximate this through iterative updates:
\begin{equation}
    \mathbf{S}^{(0)} = \mathbf{I}, \quad \mathbf{S}^{(t+1)} = \alpha\mathbf{\hat{A}}\mathbf{S}^{(t)} + (1-\alpha)\mathbf{S}^{(t)}
\end{equation}

To enhance the robustness and generalization capability of node embeddings, we incorporate a symmetric edge dropout strategy during the training phase. For the diffusion matrix $\mathbf{S}$, we construct a binary mask $\mathbf{M} \in \{0,1\}^{n \times n}$ by sampling the upper triangular elements from a Bernoulli distribution with success probability $1-p_{\text{drop}}$, where $p_{\text{drop}}=0.1$:

\begin{equation}
\mathbf{M}_{ij}^{\text{upper}} = 
\begin{cases}
1, & \text{if } r_{ij} > p_{\text{drop}} \\
0, & \text{otherwise}
\end{cases}
\end{equation}

The masked diffusion matrix $\mathbf{\tilde{S}}$ is then computed through element-wise application of the symmetric mask with expectation-preserving scaling:

\begin{equation}
\mathbf{\tilde{S}} = \frac{\mathbf{S} \odot \mathbf{M}}{1-p_{\text{drop}}}
\end{equation}

where $\odot$ denotes the Hadamard product. This stochastic regularization prevents overfitting to specific structural patterns while promoting diverse node embeddings.

Once the enhanced graph structure is obtained, the node embeddings $\mathbf{h}$ are updated:
\begin{equation}
\mathbf{h} = \text{dropout}\left(\sigma\left(\mathbf{\tilde{S}}\mathbf{X}\mathbf{W} + \mathbf{b}\right), p\right)
\end{equation}
where $\mathbf{X}$ is the input feature matrix, $\mathbf{W}$ is the learnable weight parameters, $\mathbf{b}$ is the bias term, with feature dropout rate $p$, and $\sigma$ refers to the nonlinear activation function.

\subsection{Quantity Balancing Strategy}
\label{sec:quantity_balance}

For quantity balancing, it adopts the same approach as GraphMixup\cite{13}, which consists of two main components: minority node generation and edge prediction for topology reconstruction. Once embeddings of node $v$ are learned through structural balance, the target minority node $v$ and its nearest neighbor $nn(v)$ is performed to generate new minority nodes $v' \in V_S$:

\begin{equation}
\begin{aligned}
\mathbf{h}'_i = \lambda\mathbf{h}_i + (1-\lambda)\mathbf{h}_{\text{nn}(i)}\\\quad \text{nn}(i) = \underset{j \in \{\mathcal{V}\setminus i\}, y_j=y_i}{\text{argmin}} \|\mathbf{h}_j - \mathbf{h}_i\|
\end{aligned}
\end{equation}

where $\lambda \in [0,1]$ is a random variable, following uniform distribution. 

The edge predictor determines the connectivity of synthetic nodes using a multi-task learning framework. The framework includes adjacency reconstruction that establishes connections between generated and existing nodes:
\begin{equation}
    \mathcal{L}_{\text{rec}} = \|\widehat{\mathbf{A}} - \mathbf{A}\|_F^2
\end{equation}
where \(\widehat{\mathbf{A}}_{v,u}\) is the predicted connectivity between node \(v\) and \(u\), calculated as:
\begin{equation}
    \widehat{\mathbf{A}}_{v,u} = \sigma(\mathbf{h}_v^T W \mathbf{h}_u)
\end{equation}
where \(\sigma\) is the sigmoid function, \(\mathbf{h}_v\) and \(\mathbf{h}_u\) are the feature vectors of nodes \(v\) and \(u\), and \(W \in \mathbb{R}^F\) is a learnable weight matrix.

Following GraphMixup\cite{13}, the framework also utilizes local structure preservation and global consistency regularization. Local structure preservation encodes neighborhood proximity through path length classification:
\begin{equation}
    \mathcal{L}_{\text{local}} = \frac{1}{|\mathcal{S}|}\sum_{(v,u)\in\mathcal{S}} \text{CE}\left(g_{\theta}([\mathbf{z}_v \odot \mathbf{z}_u]), \tilde{C}_{v,u}\right)
\end{equation}
where $\odot$ is element-wise multiplication, $g_{\theta}$ is a 2-layer perceptron, and $\tilde{C}_{v,u} = \min(d(v,u), 3)$ discretizes path lengths. Global consistency regularization maintains overall topology through cluster-aware distance regression:
\begin{equation}
    \mathcal{L}_{\text{global}} = \frac{1}{|\mathcal{V}|}\sum_{v_i\in\mathcal{V}} \|h_{\phi}(\mathbf{z}_i) - \mathbf{d}_i\|_2^2
\end{equation}
where $\mathbf{d}_i \in \mathbb{R}^T$ contains shortest-path distances to cluster centroids.

The final training objective integrates these components:
\begin{equation}
    \mathcal{L}_{\text{edge}} = \mathcal{L}_{\text{rec}}+ \mathcal{L}_{\text{local}} +\mathcal{L}_{\text{global}}
\end{equation}

Additionally, we incorporate an adaptive rebalancing mechanism via reinforcement learning to dynamically adjust over-sampling scales for different datasets. We formulate the parameter tuning process as a Markov Decision Process (MDP) and employ Q-learning to solve it. The state space represents the current parameter value, with actions that increment or decrement this value. We define a reward function based on accuracy improvement, and use Q-learning with $\epsilon$-greedy exploration:
\begin{equation}
Q(s,a) \leftarrow Q(s,a) + \alpha[r + \gamma \max_{a'} Q(s',a') - Q(s,a)]
\end{equation}
where $\alpha$ is the learning rate and $\gamma$ is the discount factor. This mechanism enables our model to dynamically refine its rebalancing strategy during training, optimizing performance across varying imbalance scenarios and different datasets.

\subsection{Optimization Objective and Training Strategy}
\label{sec3.5}

After obtaining high-quality node embeddings through the Structural Balance process, we employ a GNN-based classifier for node classification. 
\begin{equation}
\mathbf{h}_v^{(L+1)} = \sigma\left(\mathbf{\widetilde{W}}^{(1)} \cdot [\mathbf{h}_v^{(L)} \parallel \mathbf{H}_O^{(L)} \cdot \mathbf{A}_O[:, v]]\right)
\end{equation}

\begin{equation}
\hat{\mathbf{y}}_v = \text{softmax}\left(\mathbf{\widetilde{W}}^{(2)} \cdot \mathbf{h}_v^{(L+1)}\right)
\end{equation}

where $\parallel$ denotes the concatenation operation, $\mathbf{\widetilde{W}}^{(1)} \in \mathbb{R}^{F \times 2F}$ and $\mathbf{\widetilde{W}}^{(2)} \in \mathbb{R}^{M \times F}$ are learnable weight matrices, and $\mathbf{A}_O[:, v]$ represents the connectivity pattern of node $v$. This architecture enhances node classification by incorporating both node features and structural information.

The classifier is optimized using cross-entropy loss on the combined set of original and synthetic labeled nodes:

\begin{equation}
\mathcal{L}_{\text{node}} = \frac{1}{|\mathcal{V}_O|} \sum_{v \in \mathcal{V}_O} \mathcal{L}_{CE}(\hat{\mathbf{y}}_v, y_v)
\end{equation}

where $\mathcal{V}_O = \mathcal{V}_L \cup \mathcal{V}_S$ represents the entire labeled node set, and $\mathcal{L}_{CE}$ is the cross-entropy loss function. During testing, the predicted class for node $v$ is determined as the class with the highest probability.

The overall training objective of GraphSB integrates the structural balance, node synthesis and classification components.  the final objective function can be written as:

\begin{equation}
\min_{\theta, \gamma, \phi} \mathcal{L}_{\text{node}} +  \mathcal{L}_{\text{edge}}
\end{equation}

where $\theta$, $\gamma$, and $\phi$ are the parameters for the feature extractor, edge predictor, and node classifier, respectively.

The training stability of our framework hinges on high-quality embeddings and synthesized edges, necessitating a phased optimization strategy. Initially, the feature encoder and edge generation module undergo structural initialization through exclusive optimization of the edge reconstruction loss $\mathcal{L}_{\text{edge}}$. This pretrained configuration subsequently serves as the foundation for holistic network refinement, where the composite objective $\mathcal{L}_{\text{node}} + \lambda\mathcal{L}_{\text{edge}}$ drives joint parameter updates.

\section{Experiments}
\label{sec4}

In this section, we conduct comprehensive experiments to validate the effectiveness of GraphSB through answering the following research questions:

\begin{itemize}
    \item \textbf{RQ1}: Does GraphSB outperform state-of-the-art imbalanced node classification methods across diverse graph datasets?
    \item \textbf{RQ2}: How robust is GraphSB under varying degrees of class imbalance compared to baseline methods?
    \item \textbf{RQ3}: How do different components (Structure Enhancement and Relation Diffusion) and hyperparameters of the structural balance mechanism affect model performance?
    \item \textbf{RQ4}: Does the  structural balance module generalize to different graph synthesis architectures to achieve better performance?
\end{itemize}

\subsection{Experimental Setup}
\label{sec5.1}

\subsubsection{Datasets}
\label{sec4.1.1}
We evaluate GraphSB on two categories of graph datasets: (1) Citation networks with manual imbalance settings (Cora, Citeseer, PubMed)\cite{sen2008collective} and (2) Social networks with natural imbalance (BlogCatalog, Wiki-CS)\cite{huang2017label,mernyei2020wiki}. Table~\ref{tab:dataset_stats} summarizes the key statistics and imbalance characteristics.

\begin{table}[ht]
\centering
\setlength{\tabcolsep}{4pt}
\caption{Dataset statistics and imbalance characteristics. $\rho$ denotes the imbalance ratio.}
\label{tab:dataset_stats}
\begin{tabular}{lrrrrr}
\toprule
Dataset & Nodes & Edges & Features & Classes & $\rho$ \\
\midrule
Cora & 2,708 & 5,429 & 1,433 & 7 & 0.22 \\
Citeseer & 3,327 & 4,732 & 3,703 & 6 & 0.36 \\
PubMed & 19,717 & 44,338 & 500 & 3 & 0.30 \\
BlogCatalog & 10,312 & 333,983 & 64 & 36 & 0.004 \\
Wiki-CS & 11,701 & 216,123 & 300 & 10 & 0.35 \\
\bottomrule
\end{tabular}
\end{table}

\textbf{Citation Networks.} For Cora and Citeseer,both exhibit a slight class imbalance, with class imbalance ratios of 0.22 and 0.36, respectively. we follow the semi-supervised protocol with controlled imbalance\cite{zhao2021graphsmote}:
\begin{itemize}
    \item select 3 classes as minority ($\mathcal{C}_{\text{min}}$) and the remainder as majority ($\mathcal{C}_{\text{maj}}$)
    \item For $\mathcal{C}_{\text{maj}}$: 20 labeled nodes per class
    \item For $\mathcal{C}_{\text{min}}$: $20 \times \rho$ labeled nodes ($\rho \in [0.1, 0.9]$) with a class imbalance ratio of $\rho$ defaulting to 0.5
\end{itemize}
For PubMed, we select one class as the minority class and the remaining two as majority classes, following the same protocol.
Cora, Citeseer, and PubMed are commonly used as benchmark datasets for imbalanced node classification tasks.

\textbf{Social Networks.} For naturally imbalanced datasets: The BlogCatalog dataset\cite{huang2017label} exhibits genuine class imbalance, with 14 categories having less than 100 instances and 8 categories having more than 500 instances.Wiki-CS\cite{mernyei2020wiki},The class distribution in this dataset is genuinely imbalanced,where we treat classes with fewer samples than the average sample size across all classes as minority classes.
\begin{itemize}
    \item \textbf{BlogCatalog}: 36 classes with extreme imbalance ($\rho=0.004$). Minority classes defined as $\mathcal{C}_{\text{min}} = \{c_i |N(c_i) < 50\}$ containing only 0.4\% of total nodes
    \item \textbf{Wiki-CS}: 10 classes with $\mathcal{C}_{\text{min}} = \{c_i |N(c_i) < \frac{1}{|\mathcal{C}|}\sum_j N(c_j)\}$
\end{itemize}
we split the train/validation/test sets in proportion to 1:1:2 following the study\cite{1}.

\subsubsection{Hyperparameters}
\label{sec4.1.2}

We consider classes with fewer than the average samples per class as minority classes. To achieve different class imbalance ratios, we have varied $\rho \in \{0.1,0.3,0.5, 0.7,0.9\}$. The following hyperparameters are set for all datasets: Adam optimizer with learning rate $lr = 0.001$ and weight decay $decay = 5 \times 10^{-4}$; Maximum Epoch $E = 4000$; Layer number $L = 2$ with hidden dimension $F = 32$; Relation $K = 4$; Loss weights $\beta = 1.0$; Threshold $\eta = 0.5$. In the reinforcement mixup module, we set $\gamma = 1$, $\epsilon = 0.9$, $\Delta \kappa = 0.05$. Besides, the initialization value $\alpha_{\text{init}}^i = \frac{N}{M|C_i|}$ is set class-wise for minority class $C_i \in C_S$ on each dataset. Each set of experiments is run five times with different random seeds, and the average is reported as the metric.

\subsubsection{Baselines}
\label{sec4.1.3}

To evaluate the effectiveness of our proposed structural balancing approach in class-imbalanced scenarios, we select some representative methods as baselines: \textbf{(1) Vanilla\cite{hamilton2017inductive}}: baseline without imbalance handling; \textbf{(2) Over-Sampling} and \textbf{(3) Re-weight\cite{yuan2012sampling+}}: classical rebalancing techniques; \textbf{(4) SMOTE\cite{chawla2002smote}}: traditional synthetic oversampling; \textbf{(5) DR-GNN\cite{DR-GCN}} and \textbf{(6) RA-GNN\cite{RA-GCN}}: adversarial training approaches for imbalanced graphs; \textbf{(7) Embed-SMOTE\cite{ando2017deep}}: embedding-based oversampling; \textbf{(8) GraphSMOTE\cite{zhao2021graphsmote}}: graph-specific minority node synthesis; \textbf{(9) GraphENS\cite{9}}: ego network-based node generation; \textbf{(10) GraphMixup\cite{13}}: semantic-level minority synthesis; \textbf{(11) CDCGAN\cite{liu2025cdcgan}}: generative adversarial approach for graph data. Among these methods, (7)-(11) represent typical graph synthesis approaches. Additionally, we apply our structure balance module as a plug-in to these typical synthesis methods, demonstrating significant improvements across all frameworks.

\subsection{Performance Comparison}
\label{sec5.2}

To comprehensively evaluate the effectiveness of our proposed GraphSB in addressing imbalanced node classification, we conduct extensive experiments on both citation networks (Cora, Citeseer, PubMed)\cite{sen2008collective} and social networks (BlogCatalog, Wiki-CS)\cite{huang2017label,mernyei2020wiki}. We compare GraphSB with various state-of-the-art baselines under different experimental settings. The results demonstrate the superior performance of our relation-balanced approach.

Tables~\ref{tab:table1} and~\ref{tab:table2} present the comparative results on citation networks and social networks respectively. For citation networks, we follow the semi-supervised protocol with controlled imbalance ratio $\rho=0.5$, while for social networks, we utilize their natural imbalance distributions. GraphSB consistently outperforms all baseline methods across different datasets and metrics. For instance, on Cora, GraphSB achieves 81.4\% accuracy, surpassing the second-best method GraphMixup by 3.9\%. The performance gain of GraphSB is also pronounced on naturally imbalanced social networks. On BlogCatalog, which has an extreme imbalance ratio of 0.004, GraphSB improves the F1-score by 12.98\% compared to traditional methods, demonstrating its robustness in handling severe imbalance scenarios. Our relation-balanced approach shows particular advantages in maintaining both local and global structural information. This is evidenced by the consistent improvements in AUC scores across all datasets, indicating better discrimination between classes.

The success stems from our three-stage topology optimization: (1) similarity-based edge construction for minority nodes generates class-aware connections, (2) multi-step diffusion captures higher-order dependencies, and (3) balanced adjacency integration preserves structural integrity while amplifying minority signals. This systematic approach addresses both local neighborhood sparsity and global structural distortion in imbalanced graphs.

\newcommand{\best}[1]{\textcolor{red}{#1}}
\newcommand{\secondbest}[1]{\textcolor{blue}{#1}}

\begin{table*}[!ht]
\centering
\footnotesize
\setlength{\tabcolsep}{1pt}
\caption{Performance comparison on citation networks (Cora, Citeseer and PubMed) using semi-supervised protocol with imbalance ratio $\rho=0.5$.}
\label{tab:table1}
\begin{tabular*}{\textwidth}{@{\extracolsep{\fill}}lccccccccc@{}}
\toprule
\multirow{2}{*}{Methods} & \multicolumn{3}{c}{Cora} & \multicolumn{3}{c}{Citeseer} & \multicolumn{3}{c}{PubMeb} \\
\cmidrule(lr){2-4} \cmidrule(lr){5-7} \cmidrule(lr){8-10}
                            & Acc & AUC & F1 & Acc & AUC & F1 & Acc & AUC & F1 \\
\midrule
Vanilla            & 68.12$\pm$0.43 &91.23$\pm$0.52 &68.14$\pm$0.45 &54.62$\pm$0.41 &81.73$\pm$0.32 &53.75$\pm$0.54 &67.51$\pm$0.34 &85.42$\pm$0.23 &67.43$\pm$0.42 \\
Over-Sampling     &73.23$\pm$0.32 &92.72$\pm$0.31 &72.91$\pm$0.35 &53.54$\pm$0.72 &82.93$\pm$0.34 &53.32$\pm$0.63 &69.21$\pm$0.12 &87.71$\pm$0.04 &69.32$\pm$0.15 \\
Re-weight         &72.92$\pm$0.24 &92.63$\pm$0.32 &72.43$\pm$0.35 &55.52$\pm$0.53 &82.84$\pm$0.32 &55.73$\pm$0.52 &69.72$\pm$0.13 &87.04$\pm$0.05 &69.42$\pm$0.23 \\
SMOTE             &73.24$\pm$0.32 &92.53$\pm$0.34 &73.02$\pm$0.32 &54.03$\pm$0.62 &82.92$\pm$0.33 &53.74$\pm$0.63 &70.82$\pm$0.14 &87.73$\pm$0.12 &70.43$\pm$0.23 \\
Embed-SMOTE       &72.32$\pm$0.23 &91.82$\pm$0.24 &72.23$\pm$0.25 &56.73$\pm$0.64 &83.04$\pm$0.42 &56.82$\pm$0.63 &69.23$\pm$0.32 &86.32$\pm$0.24 &69.13$\pm$0.32 \\
DR-GNN            &74.82$\pm$0.23 &93.32$\pm$0.24 &74.52$\pm$0.25 &55.23$\pm$0.52 &83.23$\pm$0.34 &55.83$\pm$0.62 &68.52$\pm$0.43 &86.12$\pm$0.13 &68.14$\pm$0.53 \\
RA-GNN            &75.53$\pm$0.42 &93.74$\pm$0.53 &75.52$\pm$0.43 &58.43$\pm$0.63 &87.24$\pm$0.42 &58.32$\pm$0.64 &71.32$\pm$0.12 &87.42$\pm$0.13 &71.52$\pm$0.14 \\
GraphSMOTE        &74.32$\pm$0.23 &93.12$\pm$0.24 &73.92$\pm$0.23 &57.52$\pm$0.53 &86.73$\pm$0.42 &56.72$\pm$0.43 &68.23$\pm$0.32 &86.03$\pm$0.23 &68.23$\pm$0.32 \\
GraphENS          &75.82$\pm$0.43 &\underline{94.52$\pm$0.32} &74.13$\pm$0.32 &59.52$\pm$0.52 &87.53$\pm$0.42 &58.23$\pm$0.43 &72.23$\pm$0.23 &87.82$\pm$0.12 &71.92$\pm$0.23 \\
GraphMixup        &77.52$\pm$0.23 &94.32$\pm$0.42 &77.42$\pm$0.32 &\underline{63.32$\pm$0.52} &\underline{88.42$\pm$0.53} &\underline{63.12$\pm$0.73} &71.42$\pm$0.32 &86.62$\pm$0.23 &71.72$\pm$0.32 \\
CDCGAN            &\underline{78.12$\pm$0.32} &94.52$\pm$0.42 &\underline{77.92$\pm$0.43} &63.23$\pm$0.42 &84.23$\pm$0.42 &63.23$\pm$0.42 &\underline{72.82$\pm$0.42} &\underline{88.52$\pm$0.12} &\underline{74.42$\pm$0.52} \\
\midrule
\textbf{GraphSB}  &\textbf{81.90$\pm$0.32} &\textbf{96.43$\pm$0.32} &\textbf{81.78$\pm$0.23} &\textbf{64.42$\pm$0.32} &\textbf{90.82$\pm$0.42} &\textbf{64.82$\pm$0.42} &\textbf{75.82$\pm$0.23} &\textbf{89.52$\pm$0.32} &\textbf{74.72$\pm$0.42} \\
\bottomrule
\end{tabular*}
\end{table*}

\begin{table*}[!ht]
\centering
\small
\setlength{\tabcolsep}{4pt}
\caption{Performance comparison of different methods on BlogCatalog and Wiki-CS datasets.}
\label{tab:table2}
\begin{tabular*}{\textwidth}{@{\extracolsep{\fill}}lcccccc@{}}
\toprule
\multirow{2}{*}{Methods} & \multicolumn{3}{c}{BlogCatalog} & \multicolumn{3}{c}{Wiki-CS} \\
\cmidrule(lr){2-4} \cmidrule(lr){5-7}
                         & Acc & AUC & F1 & Acc & AUC & F1 \\
\midrule
Vanilla            & 20.82$\pm$0.51 &58.37$\pm$0.42 &6.73$\pm$0.23 &76.79$\pm$0.14 &94.05$\pm$0.25 &73.56$\pm$0.16 \\
Over-Sampling     &20.26$\pm$0.43 &59.21$\pm$0.34 &7.25$\pm$0.35 &77.94$\pm$0.26 &94.87$\pm$0.27 &74.43$\pm$0.28 \\
Re-weight         &20.44$\pm$0.55 &58.53$\pm$0.46 &6.98$\pm$0.27 &76.12$\pm$0.18 &93.98$\pm$0.29 &73.85$\pm$0.20 \\
SMOTE             &20.63$\pm$0.47 &59.59$\pm$0.38 &7.31$\pm$0.19 &78.07$\pm$0.31 &94.54$\pm$0.32 &74.58$\pm$0.32 \\
Embed-SMOTE       &20.22$\pm$0.61 &58.15$\pm$0.49 &7.02$\pm$0.21 &75.03$\pm$0.42 &94.32$\pm$0.33 &72.19$\pm$0.42 \\
DR-GNN            &24.49$\pm$0.45 &60.58$\pm$0.57 &11.93$\pm$0.46 &78.69$\pm$0.28 &95.03$\pm$0.19 &75.78$\pm$0.47 \\
RA-GNN            &25.32$\pm$0.37 &65.59$\pm$0.48 &12.47$\pm$0.49 &79.08$\pm$0.30 &95.26$\pm$0.22 &76.45$\pm$0.49 \\
GraphSMOTE        &24.75$\pm$0.58 &64.43$\pm$0.59 &12.38$\pm$0.22 &78.56$\pm$0.34 &95.54$\pm$0.24 &75.29$\pm$0.35 \\
GraphENS          &25.06$\pm$0.56 &65.72$\pm$0.47 &12.54$\pm$0.40 &79.27$\pm$0.18 &95.38$\pm$0.37 &76.16$\pm$0.43 \\
GraphMixup        &25.87$\pm$0.62 &\underline{66.53$\pm$0.13} &\underline{12.77$\pm$0.24} &\underline{80.45$\pm$0.26} &\underline{96.47$\pm$0.37} &\underline{76.59$\pm$0.25} \\
CDCGAN           &\underline{26.11$\pm$0.21} &66.45$\pm$0.36 &10.25$\pm$0.51 &79.28$\pm$0.40 &95.70$\pm$0.33 &76.40$\pm$0.28 \\
\midrule
\textbf{GraphSB} &\textbf{28.41$\pm$0.64} &\textbf{66.74$\pm$0.66} &\textbf{12.98$\pm$0.67} &\textbf{82.24$\pm$0.28} &\textbf{97.91$\pm$0.38} &\textbf{79.40$\pm$0.44} \\
\bottomrule
\end{tabular*}
\end{table*}

\subsection{Robustness to Different Imbalance Ratios}
\label{sec5.3}

To systematically evaluate the robustness of GraphSB under varying degrees of class imbalance, we conduct extensive experiments on the Cora dataset by adjusting the imbalance ratio $\rho \in \{0.1, 0.3, 0.5, 0.7, 0.9\}$. As shown in Figure~\ref{fig:imbalance ratios}, our model demonstrates strong performance in terms of both Accuracy and F1-score across different imbalance settings. Specifically, we observe that GraphSB exhibits remarkable adaptability to varying imbalance ratios, consistently achieving significant improvements across all settings. The F1-score results further validate our model's effectiveness and robustness, particularly in handling minority classes. More impressively, under extreme imbalance conditions ($\rho=0.1$), GraphSB achieves significant improvements in both metrics, outperforming the second-best method by 8\% in Accuracy  and 5.4\% in F1-score. Additionally, the ROC-AUC results presented in Table~\ref{tab:imbalance} further support these findings, where GraphSB achieves consistently superior performance with AUC scores ranging from 0.945 to 0.967, outperforming CDCGAN by 3.5\% under extreme imbalance ($\rho=0.1$) and maintaining this advantage across all imbalance ratios.

\begin{figure*}[!ht]
\centering
\includegraphics[width=\textwidth]{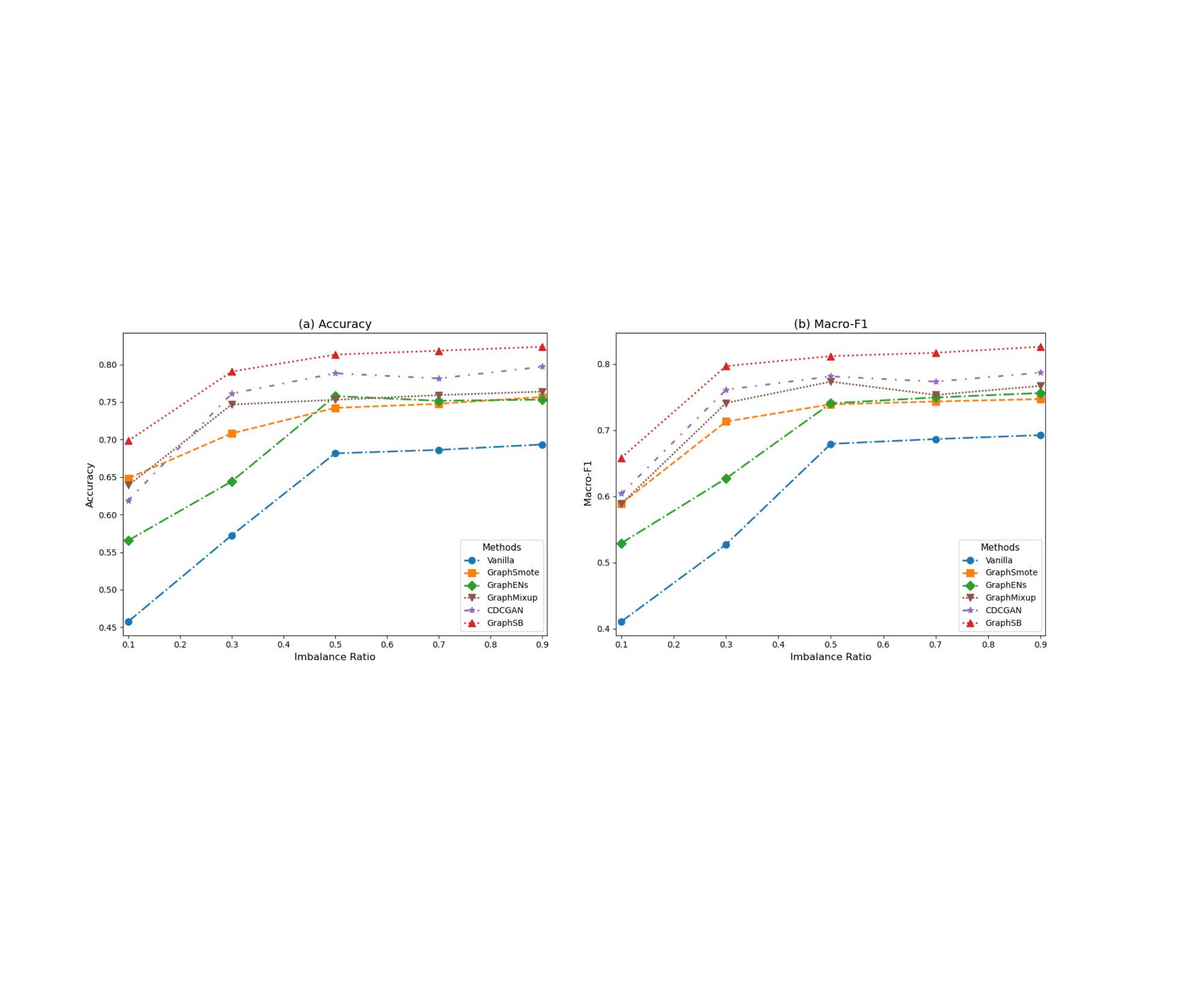}
\caption{Performance analysis with different imbalance ratios on Cora.}
\label{fig:imbalance ratios}
\end{figure*}

\begin{table*}[!ht]
\centering
\caption{Performance of different methods under varying class-imbalanced ratios $\rho$.}
\label{tab:imbalance}
\begin{tabular*}{\textwidth}{@{\extracolsep{\fill}}lcccccc}
\toprule
\textbf{Methods} & \multicolumn{5}{c}{\textbf{Class-Imbalanced Ratio $\rho$}} \\
\cmidrule(lr){2-6}
 & \textbf{0.1} & \textbf{0.3} & \textbf{0.5} & \textbf{0.7} & \textbf{0.9} \\
\midrule
Vanilla           & 0.843 & 0.907 & 0.919 & 0.923 & 0.925 \\
Over-Sampling    & 0.830 & 0.917 & 0.927 & 0.931 & 0.933 \\
Re-weight        & 0.869 & 0.921 & 0.925 & 0.929 & 0.931 \\
SMOTE            & 0.839 & 0.917 & 0.925 & 0.930 & 0.932 \\
Embed-SMOTE      & 0.870 & 0.906 & 0.918 & 0.927 & 0.929 \\
DR-GNN           & 0.890 & 0.921 & 0.929 & 0.937 & 0.939 \\
RA-GNN           & 0.895 & 0.925 & 0.933 & 0.941 & 0.943 \\
GraphSMOTE       & 0.887 & 0.923 & 0.930 & 0.935 & 0.937 \\
GraphENS         & 0.894 & 0.925 & 0.935 & 0.939 & 0.941 \\
GraphMixup       & 0.903 & \underline{0.931} & \underline{0.942} & 0.945 & 0.947 \\
CDCGAN           & \underline{0.910} & 0.926 & 0.940 & \underline{0.950} & \underline{0.949} \\
GraphSB          & \textbf{0.945} & \textbf{0.953} & \textbf{0.964} & \textbf{0.966} & \textbf{0.967} \\

\bottomrule
\end{tabular*}
\end{table*}

\begin{figure*}[!ht]
\centering
\includegraphics[width=\textwidth]{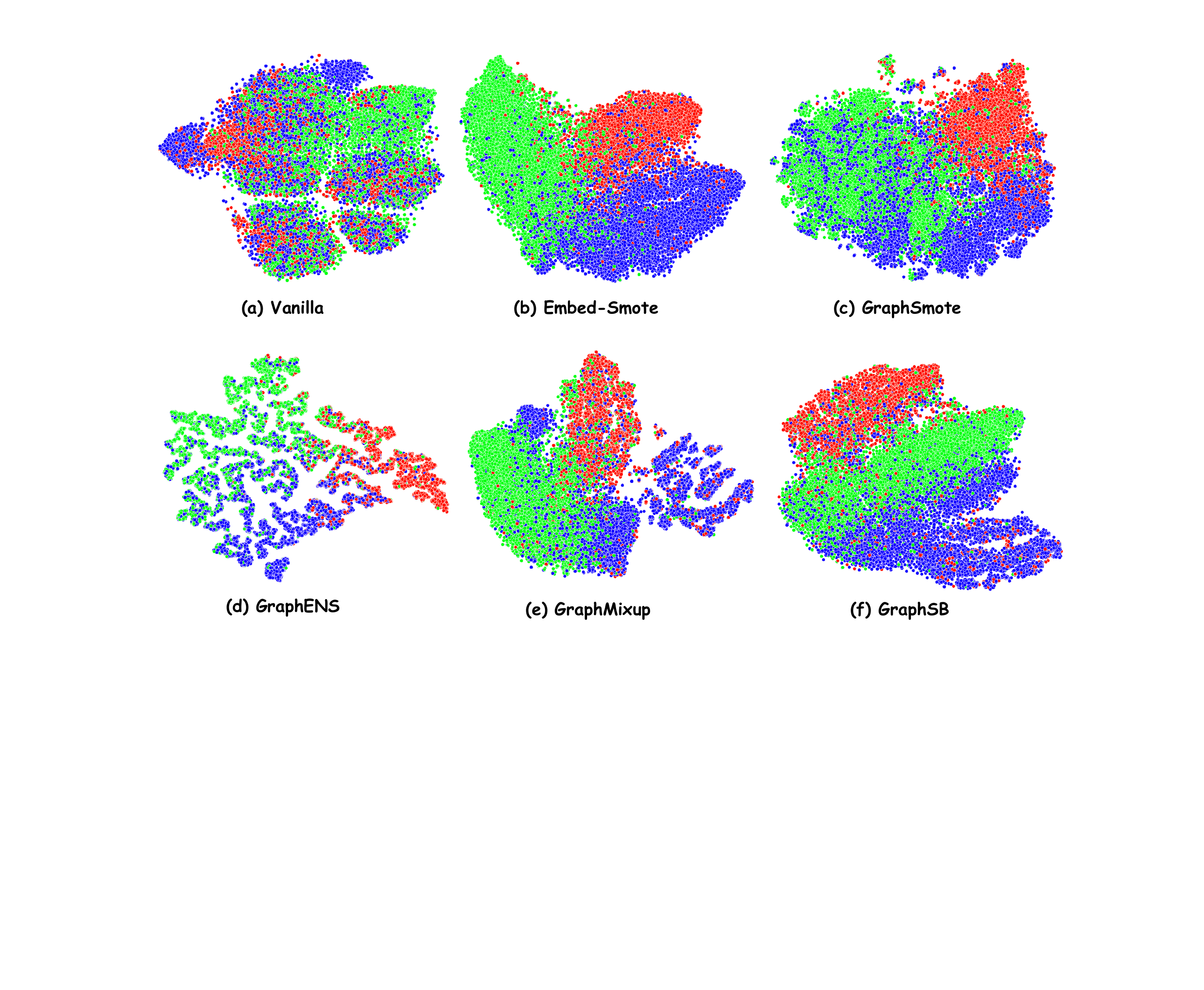}
\caption{Visualization on dataset PubMed.}
\label{fig4}
\end{figure*}

\subsection{Ablation study}
\label{sec5.4}
We develop a framework for structural balance-based enhancement, and to evaluate the impact of the two modules, Graph Structure Enhancement (SE) and Relation Diffusion (RD), on imbalanced node classification, we create the following variants of GraphSB based on these two modules: (A) Graph Structure Enhancement (w/o SE); (B) Relation Diffusion (w/o RD); (C) both Graph Structure Enhancement and Relation Diffusion (w/o Both), and (D) the full model.

\begin{figure*}[!ht]
\centering
\includegraphics[width=\textwidth]{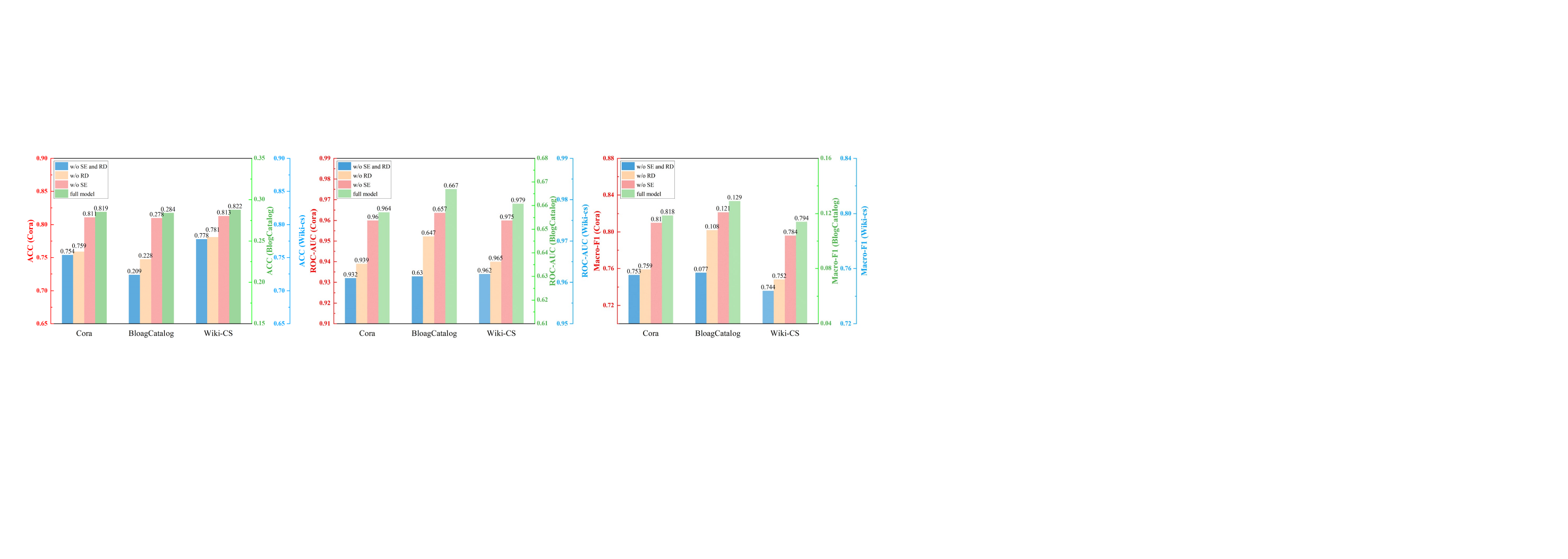}
\caption{Ablation study: The impact of each module.}
\label{Ablation study}
\end{figure*}

Figure~\ref{Ablation study} depicts the outcomes of ablation experiments across different datasets. From these results, we observe that both modules contribute significantly to the model's performance, with the complete GraphSB framework consistently outperforming its ablated variants. Removing both modules and using the default GCN network for node embedding updates is generally suboptimal because it tends to exacerbate the challenges faced by minority class nodes. On the other hand, Relation Diffusion increases the receptive field while the random edge dropout operation helps prevent overfitting; Structure Enhancement allows for minority enhancement, further improving node classification performance across various metrics.

\subsection{Over-Sampling Scale}
\label{sec5.5}

The reinforcement learning module addresses this through adaptive ratio control. For different datasets, GraphSB automatically determines appropriate over-sampling scales for different minority classes, starting from an initial ratio $\alpha_{\text{init}}^i$ for each dataset $i$. The final over-sampling scale is learned through $\alpha^i - \alpha_{\text{init}}^i$. Moreover, even on the same dataset, GraphSB can generate different numbers of supplementary nodes under different base architectures, as the over-sampling scale is learned adaptively. Figure~\ref{fig:rl_adaptive} demonstrates the over-sampling ratio adjustment process across all datasets.

\subsection{Visualization Analysis}
\label{sec5.6}

The visualization experiments were conducted to demonstrate the representation learning capability of GraphSB by projecting the learned 128-dimensional node embeddings into a 2-dimensional space via t-SNE on the PubMed dataset. The perplexity of t-SNE is set to 30 with 5000 iterations to ensure stable visualization results. The visualization results are illustrated in Figure~\ref{fig4}.We compare GraphSB with the Vanilla method and four state-of-the-art GNN-based methods specifically designed for imbalanced node classification. To ensure stable visualization results, the t-SNE perplexity is set to 30 with 5000 iterations. The visualization results reveal several significant findings: The vanilla GNN method exhibits substantial node entanglement across different classes, indicating its inherent limitations in handling class imbalance during representation learning. Although GraphSMOTE and GraphMixup demonstrate enhanced performance through their respective node augmentation strategies (SMOTE and Mixup), the boundaries between different classes remain ambiguous. GraphENS, as the strongest baseline method, achieves improved class separability through ego network generation and feature saliency filtering; however, the dispersed embedding of minority class nodes potentially compromises its classification performance. In contrast, GraphSB exhibits superior representation learning capability by successfully constructing three distinct and compact clusters for both majority and minority classes, demonstrating its effectiveness in handling class imbalance.

\begin{figure}[ht]
\centering
\includegraphics[width=0.5\textwidth]{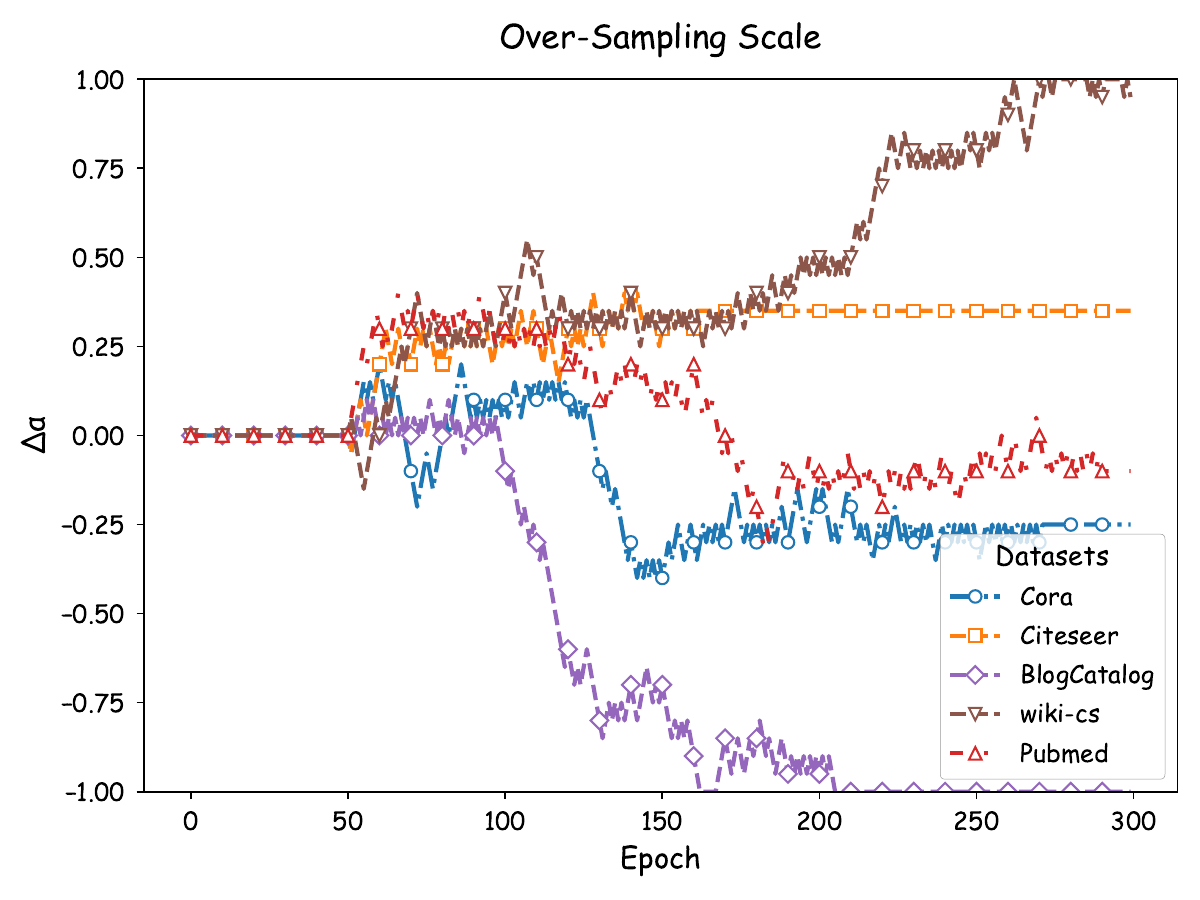}
\caption{Changes in over-sampling scale ($\alpha^i - \alpha_{\text{init}}^i$) of minority classes across all datasets.}
\label{fig:rl_adaptive}
\end{figure}

\begin{table*}[!ht]
\centering
\footnotesize
\setlength{\tabcolsep}{1pt}
\caption{Performance comparison of different methods with and without structural balance (SB) on Cora, Citeseer and BlogCatalog datasets.}
\label{tab:SB_results}
\begin{tabular*}{\textwidth}{@{\extracolsep{\fill}}lcccccccccc@{}}
\toprule
\multirow{2}{*}{Methods} & \multicolumn{3}{c}{Cora} & \multicolumn{3}{c}{Citeseer} & \multicolumn{3}{c}{BlogCatalog} \\
\cmidrule(lr){2-4} \cmidrule(lr){5-7} \cmidrule(lr){8-10}
                         & Acc & AUC & F1 & Acc & AUC & F1 & Acc & AUC & F1 \\
\midrule
Embed-SMOTE       & 72.3 $\pm$ 0.2 & 91.8 $\pm$ 0.2 & 72.2 $\pm$ 0.2 & 56.7 $\pm$ 0.6 & 83.0 $\pm$ 0.4 & 56.8 $\pm$ 0.6 & 20.22 $\pm$ 0.6 & 58.15 $\pm$ 0.5 & 7.02 $\pm$ 0.2 \\
Embed-SMOTE+SB    & 74.3 $\pm$ 0.4 & 93.6 $\pm$ 0.4 & 74.1 $\pm$ 0.4 & 67.8 $\pm$ 0.5 & 88.5 $\pm$ 0.4 & 67.7 $\pm$ 0.5 & 25.01 $\pm$ 0.4 & 63.41 $\pm$ 0.3 & 11.73 $\pm$ 0.3 \\
GraphSMOTE        & 74.3 $\pm$ 0.2 & 93.1 $\pm$ 0.2 & 73.9 $\pm$ 0.2 & 57.5 $\pm$ 0.5 & 86.7 $\pm$ 0.4 & 56.7 $\pm$ 0.4 & 24.75 $\pm$ 0.6 & 64.43 $\pm$ 0.6 & 12.38 $\pm$ 0.2 \\
GraphSMOTE+SB     & 81.0 $\pm$ 0.3 & 95.8 $\pm$ 0.2 & 80.2 $\pm$ 0.5 & 64.2 $\pm$ 0.4 & 88.7 $\pm$ 0.3 & 63.1 $\pm$ 0.4 & 27.09 $\pm$ 0.5 & 65.56 $\pm$ 0.4 & 12.51 $\pm$ 0.3 \\
GraphENS          & 75.8 $\pm$ 0.4 & 94.5 $\pm$ 0.3 & 74.1 $\pm$ 0.3 & 59.5 $\pm$ 0.5 & 87.5 $\pm$ 0.4 & 58.2 $\pm$ 0.4 & 25.06 $\pm$ 0.6 & 65.72 $\pm$ 0.5 & 12.54 $\pm$ 0.4 \\
GraphENS+SB       & 78.3 $\pm$ 0.3 & 95.5 $\pm$ 0.4 & 77.1 $\pm$ 0.4 & 62.8 $\pm$ 0.4 & 88.3 $\pm$ 0.4 & 61.5 $\pm$ 0.4 & 25.8 $\pm$ 0.4 & 66.43 $\pm$ 0.3 & 12.77 $\pm$ 0.3 \\
GraphMixup        & 77.5 $\pm$ 0.2 & 94.3 $\pm$ 0.4 & 77.4 $\pm$ 0.3 & 63.3 $\pm$ 0.5 & 88.4 $\pm$ 0.5 & 63.1 $\pm$ 0.7 & 25.87 $\pm$ 0.6 & 66.53 $\pm$ 0.1 & 12.77 $\pm$ 0.2 \\
GraphMixup+SB     & 81.9 $\pm$ 0.3 & 96.4 $\pm$ 0.3 & 81.7 $\pm$ 0.2 & 68.5 $\pm$ 0.5 & 88.9 $\pm$ 0.4 & 69.1 $\pm$ 0.5 & 28.41 $\pm$ 0.5 & 66.74 $\pm$ 0.4 & 12.98 $\pm$ 0.3 \\
CDCGAN            & 78.1 $\pm$ 0.3 & 94.5 $\pm$ 0.4 & 77.9 $\pm$ 0.4 & 64.2 $\pm$ 0.4 & 87.0 $\pm$ 0.4 & 64.0 $\pm$ 0.4 & 26.11 $\pm$ 0.2 & 66.45 $\pm$ 0.4 & 10.25 $\pm$ 0.5 \\
CDCGAN+SB         & 81.3 $\pm$ 0.3 & 95.8 $\pm$ 0.4 & 81.9 $\pm$ 0.3 & 65.2 $\pm$ 0.3 & 87.0 $\pm$ 0.4 & 64.2 $\pm$ 0.4 & 27.79 $\pm$ 0.4 & 64.42 $\pm$ 0.3 & 11.13 $\pm$ 0.4 \\
\bottomrule
\end{tabular*}
\end{table*}

\begin{table*}[!ht]
\centering
\small
\setlength{\tabcolsep}{2pt}
\caption{Performance comparison of different methods with and without structural balance (SB) on Cora, Citeseer and BlogCatalog datasets.}
\label{tab:rb_results_final}
\begin{tabular*}{\textwidth}{@{\extracolsep{\fill}}ll|cccccccccccc@{}}
\toprule
\multicolumn{2}{c}{Metric} & \multicolumn{2}{c}{Embed-SMOTE} & \multicolumn{2}{c}{GraphSMOTE} & \multicolumn{2}{c}{GraphENS} & \multicolumn{2}{c}{GraphMixup} & \multicolumn{2}{c}{CDCGAN}  & \multicolumn{2}{c}{AVG} \\
\cmidrule(lr){3-4} \cmidrule(lr){5-6} \cmidrule(lr){7-8} \cmidrule(lr){9-10} \cmidrule(lr){11-12} \cmidrule(lr){13-14}
\multicolumn{2}{c}{} & w/o SB & +SB & w/o SB & +SB & w/o SB & +SB & w/o SB & +SB & w/o SB & +SB & w/o SB & +SB\\
\midrule
\multirow{3}{*}{\rotatebox{90}{\makecell{Cora}}}

& Acc & 72.32 & 74.34 & 74.32 & 81.30 & 75.81 & 78.33 & 77.52 & 81.93 & 78.13 & 81.34 & 75.62& 79.45(+3.83) \\
& AUC & 91.82 & 93.64 & 93.12 & 95.85 & 94.53 & 95.54 & 94.34 & 96.43 & 94.54 & 95.84 & 93.67 & 95.46(+1.79)\\
& F1  & 72.22 & 74.14 & 73.92 & 80.27 & 74.16 & 77.18 & 77.44 & 81.78 & 77.94 & 81.93 & 75.14 & 79.06(+3.92)\\
\midrule
\multirow{3}{*}{\rotatebox{90}{\makecell{Citeseer}}}

& Acc & 56.74 & 67.86 & 57.55 & 64.24 & 59.55 & 62.84 & 63.31 & 68.58 & 64.21 & 65.25 & 60.27& 65.74(+5.47) \\
& AUC & 83.01 & 88.52 & 86.71 & 88.74 & 87.51 & 88.35 & 88.42 & 88.93 & 87.05 & 87.08 & 86.54 & 88.32(+1.78)\\
& F1  & 56.81 & 67.77 & 56.71 & 63.15 & 58.21 & 61.53 & 63.12 & 69.14 & 64.04 & 64.26 & 59.78 & 65.17(+5.39)\\
\midrule
\multirow{3}{*}{\rotatebox{90}{\makecell{Blog\\Catalog}}}
& Acc & 20.22 & 25.01 & 24.75 & 27.09 & 25.06 & 25.80 & 25.87 & 28.41 & 26.11 & 27.79  & 24.40 & 26.82(+2.42)\\
& AUC & 58.15 & 63.41 & 64.43 & 65.56 & 65.72 & 66.43 & 66.53 & 66.74 & 66.45 & 64.42  & 64.26 & 65.31(+1.05)\\
& F1  & 7.02  & 11.73 & 12.38 & 12.51 & 12.54 & 12.77 & 12.77 & 12.98 & 10.25 & 11.13  & 10.99 & 12.22(+1.23)\\
\bottomrule
\end{tabular*}
\end{table*}

\begin{figure*}[!ht]
\centering
\includegraphics[width=\textwidth]{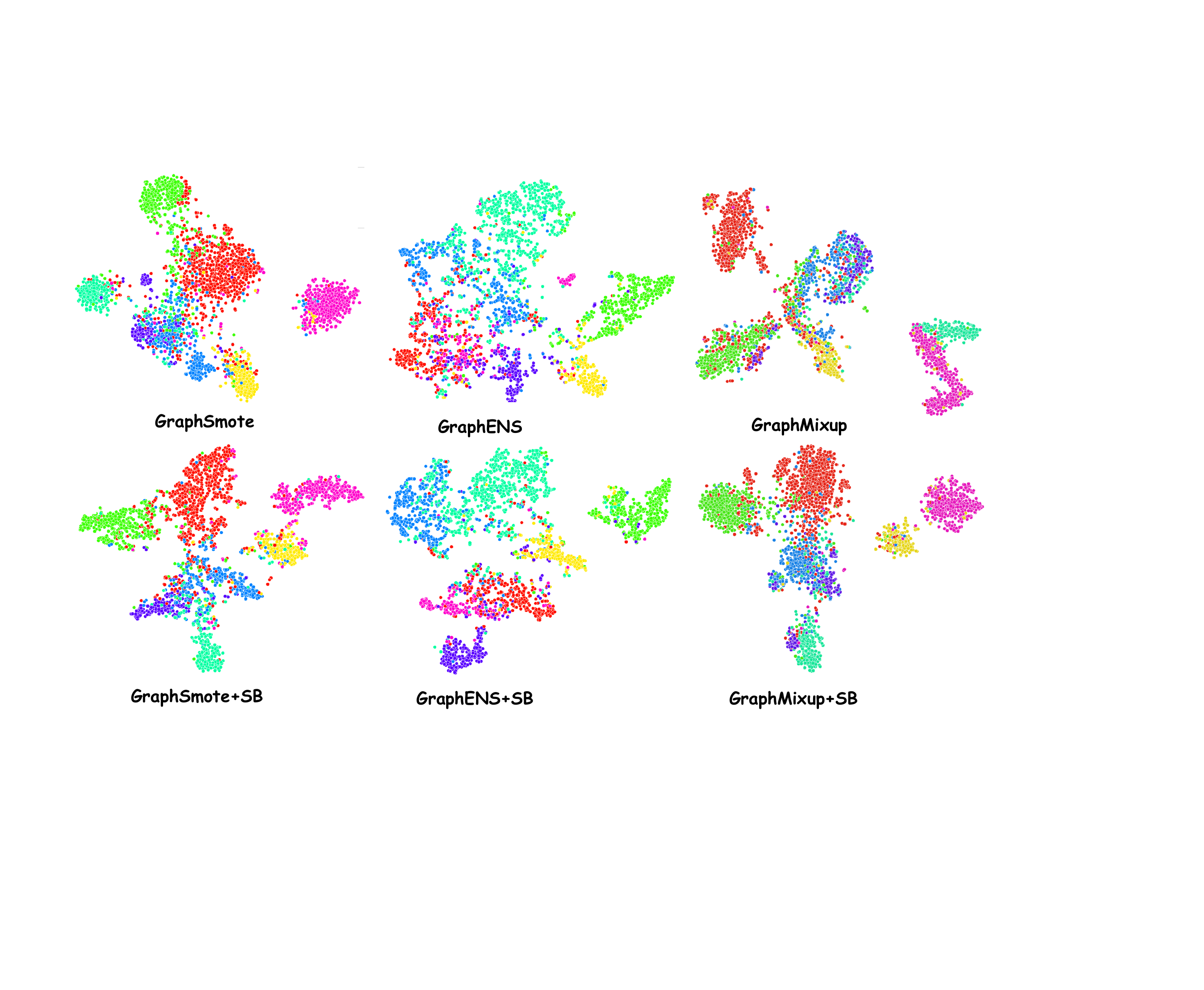}
\caption{Visualization on dataset Cora.}
\label{fig5}
\end{figure*}

\begin{figure}[!ht]
\centering
\includegraphics[width=0.5\textwidth]{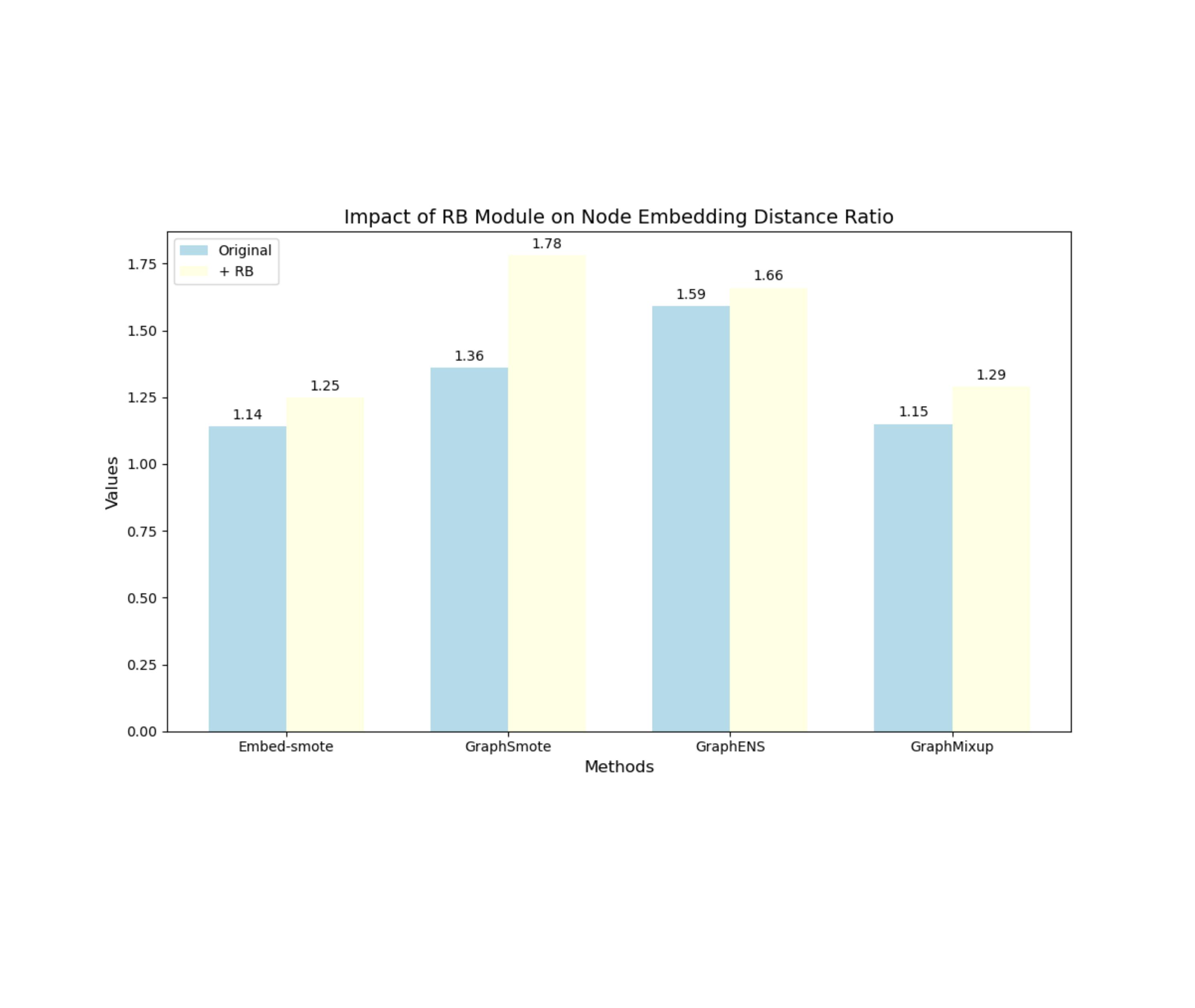}
\caption{Impact of SB Module on Node Embedding Distance Ratio.}
\label{fig2}
\end{figure}

\subsection{Universality of structural balance in Synthetic Networks}
\label{sec5.7}

Our proposed structural balance (SB) module can be seamlessly integrated into any synthetic method designed to address class imbalance in node classification with minimal code modifications. The core mechanism of SB involves initially concentrating attention on the target node, which subsequently diffuses to its neighboring nodes over multiple steps. This iterative diffusion process yields a continuous weighted diffusion matrix that defines the edge weights between the target node and other nodes. By prioritizing relational balance over numerical balance, the SB module enhances intra-class connectivity for minority nodes while mitigating cross-class interference from majority nodes.  

To evaluate the quality of the learned graph embeddings, we introduce the intra-to-inter-class distance ratio as a novel metric to quantify both intra-class consistency and inter-class separability. For a category \( C_k \) containing \( N_k \) samples, the intra-class distance is defined as:  
\[
D_{\text{intra}}(C_k) = \frac{1}{N_k (N_k - 1)} \sum_{i \neq j} d(x_i, x_j),
\]  
where \( d(x_i, x_j) \) represents the Euclidean distance between the embeddings of nodes \( x_i \) and \( x_j \) within the same category. The inter-class distance between two distinct categories \( C_i \) and \( C_j \) is defined as:  
\[
D_{\text{inter}}(C_i, C_j) = \frac{1}{N_i N_j} \sum_{x_i \in C_i} \sum_{x_j \in C_j} d(x_i, x_j).
\]  
We define the intra-to-inter-class distance ratio as:
\begin{equation}
R = \frac{\frac{1}{|C|} \sum_{k} D_{\text{inter}}(C_k, C_{\bar{k}})}{\frac{1}{|C|} \sum_{k} D_{\text{intra}}(C_k)},
\label{eq:ratio}
\end{equation}
where \( C_{\bar{k}} \) represents the set of all categories except \( C_k \). A higher value of \( R \) indicates that the embeddings exhibit greater inter-class separation relative to intra-class compactness, which is favorable for improving downstream classification performance.

Table~\ref{tab:SB_results} shows a performance comparison on the Cora dataset between various baseline methods and their SB-enhanced counterparts. In addition, Figure~\ref{fig5} presents the visualization on Cora, and the impact of the SB module on node embedding distance ratio is depicted in Figure~\ref{fig2}.


\section{Conclusion}
\label{sec6}

In the field of graph data, the challenges of handling imbalanced node classification stem not only from the imbalance in the number of categories but also from the complexity of the graph structure. Specifically, as graphs are non-Euclidean data, the balance of relationships between nodes is equally important. For instance, although the Cora dataset might exhibit good balance in terms of node relationships, intentionally creating an imbalance in category numbers can reveal different behaviors in models, which is beneficial in practical applications as it helps to reduce the model's dependence on the volume of data. In scenarios where node relationships are relatively balanced but the number of nodes is not, it is crucial to consider information from non-direct neighbors when updating node embeddings. Furthermore, when both node relationships and numbers are imbalanced, establishing reasonable inter-node relationships and considering non-direct neighbors during the embedding update process can enhance the embedding quality of minority class nodes and effectively reduce biases introduced during the data synthesis process. This comprehensive approach to considering both category imbalance and structural imbalance can significantly improve the performance and fairness of imbalanced node classification tasks.

\bibliography{sn-bibliography}
\end{document}